\documentclass[11pt,letterpaper]{article}
\usepackage{fullpage}

\usepackage{cite}
\usepackage{hyperref}

\hyphenation{op-tical net-works semi-conduc-tor}

\usepackage{colortbl}

\usepackage{paralist}
\usepackage{pdfpages}
\usepackage{enumitem}
\usepackage{booktabs}
\usepackage[override]{cmtt}
\usepackage{color,xcolor}
\usepackage{graphicx,color,eso-pic}
\usepackage{amsmath,amssymb,stmaryrd}

\usepackage[
	lambda,
	advantage,
	operators,
	sets,
	adversary,
	landau,
	probability,
	notions,
	logic,
	ff,
	mm,
	primitives,
	events,
	complexity,
	asymptotics,
	keys]
{cryptocode}





\definecolor{darkgreen}{rgb}{0,0.5,0}
\definecolor{lightblue}{RGB}{0,176,240}
\definecolor{darkblue}{RGB}{0,112,192}
\definecolor{lightpurple}{RGB}{124, 66, 168}
\definecolor{grey}{RGB}{139, 137, 137}
\definecolor{maroon}{RGB}{178, 34, 34}
\definecolor{green}{RGB}{34, 139, 34}
\definecolor{types}{RGB}{72, 61, 139}
\definecolor{gold}{rgb}{0.8, 0.33, 0.0}

\definecolor{darkgray}{gray}{0.3}

\newcommand{\skiptext}[1]{}









\newcommand{\pr}[1]{\text{Pr} [#1]}
\renewcommand{\Pr}{\text{Pr}}

\usepackage{boxedminipage}
\usepackage{cite, bm}
\usepackage{url}
\usepackage{graphicx}
\usepackage[bf]{caption}
\usepackage{subcaption}
\usepackage{color}
\usepackage{xspace}
\usepackage{multirow}
\usepackage{amsmath,amsthm,amstext,amssymb,amsfonts,latexsym}
\usepackage{wrapfig}

\definecolor{darkred}{rgb}{0.5, 0, 0}
\definecolor{darkgreen}{rgb}{0, 0.5, 0}
\definecolor{darkblue}{rgb}{0,0,0.5}



\newcommand\markx[2]{}

\newcommand{\bsigma}{\bm{\sigma}}

\newcommand{\G}{\mathcal{G}}

\newcommand{\ignore}[1]{}

\newcounter{task}

{
\theoremstyle{definition}
\newtheorem{theorem}{Theorem}

\newtheorem{corollary}{Corollary}
\newtheorem{definition}{Definition}
\newtheorem{lemma}{Lemma}
\newtheorem{proposition}{Proposition}

\newtheorem{claim}{Claim}
}

\newcounter{cnt:challenge}


\newcommand{\commentout}[1]{}

\newcommand{\C}{\mathcal{C}}
\newcommand{\D}{\mathcal{D}}

\let\P\oldP
\newcommand{\P}{\mathcal{P}}
\newcommand{\advO}{\mathcal{O}}
\let\S\oldS
\newcommand{\S}{\mathcal{S}}








\newcommand{\E}{{\ensuremath{\mathbf{E}}}\xspace}

\newcommand{\Out}{\Omega}
\newcommand{\Dec}{\Omega_{\text{DM}}}
\newcommand{\Class}{\Psi}

\newcommand{\T}{\mathcal{T}}
\renewcommand{\O}{O}

\newcommand{\Risky}{\mathsf{Risky}}
\newcommand{\Safe}{\mathsf{Safe}}

\newcommand{\M}{\mathcal{M}}

\newcommand{\define}[1]{\textbf{#1}}
\newcommand{\Bit}{\lbrace 0,1 \rbrace}
\newcommand{\Div}{D_\infty}
\newcommand{\md}{\mu}
\newcommand{\prc}[2]{\text{Pr}_{\sigma \leftarrow \D_{#2}} \left[ #1 \right]}

\newcommand{\braces}[1]{\lbrace #1 \rbrace}

\newcommand{\fairtreat}{\text{fair treatment}}
\newcommand{\ebr}{\text{equal base rates}}
\newcommand{\perfpred}{\text{perfect prediction}}

\begin{document}
%

\title{Paradoxes in Fair Computer-Aided Decision Making}

\author{
Andrew Morgan
\\ Cornell University
\\
\texttt{asmorgan@cs.cornell.edu}
\and
Rafael Pass\thanks{Supported in part by NSF Award CNS-1561209, NSF Award CNS-1217821, 
AFOSR Award FA9550-15-1-0262,  
a Microsoft Faculty Fellowship, and a Google Faculty Research Award.}\\Cornell Tech\\\texttt{rafael@cs.cornell.edu}
}

\maketitle

\begin{abstract}
Computer-aided decision making---where 
a human decision-maker is aided by a computational classifier in making
a decision---is becoming increasingly
prevalent. For instance, 
judges in at least nine states make use of algorithmic tools meant to
determine ``recidivism risk scores'' for criminal defendants in sentencing, parole, or bail decisions.
A subject of much recent debate is whether such algorithmic tools are
``fair'' in the sense that they do not discriminate against
certain groups (e.g., races) of people. 

Our main result shows that for ``non-trivial'' computer-aided decision
making, either the
classifier must be discriminatory, or a rational
decision-maker using the output of the classifier is forced to be
discriminatory. We further provide a complete characterization of
situations where fair computer-aided decision making is possible.

\end{abstract}

\newpage
\section{Introduction}
As more and more data is becoming easily available, and with vast increases in
the power of machine learning, there are an increasing number of
situations where algorithms---\emph{classifiers}---are used to help decision makers in
challenging situations. Examples range from algorithms assisting
drivers in cars, to algorithmic methods for determining credit scores, to
algorithms helping judges to make sentencing and pretrial decisions in criminal justice. 
While such \emph{computer-aided decision making} has presented
unparalleled levels of accuracy
and is becoming increasingly ubiquitous,
one of the primary concerns with its widespread adoption is the possibility for such algorithmic methods to lead to structural biases and discriminatory practices \cite{united}. 
A malicious algorithm designer, for instance, might explicitly
encode discriminatory rules into a classifier. Perhaps even more problematically, a machine learning
method may overfit the data and infer a bias, may inherit a bias
from poorly collected data, or may simply be designed to optimize some
loss function that leads to discriminatory outcomes.

A well-known instance where this concern has come to light is the debate surrounding the COMPAS (Correctional Offender Management Profiling for Alternative Sanctions) tool for recidivism analysis, a classification algorithm that is becoming increasingly widely used in the criminal justice system. Given a series of answers to questions concerning criminal defendants' backgrounds and characteristics, this tool outputs scores from 1 (low risk) to 10 (high risk) estimating how likely they are to recidivate (commit a future crime) or to recidivate violently. According to a recent study by ProPublica \cite{compas}, COMPAS and similar risk assessment algorithms are becoming increasingly widely-used throughout the United States; their results are already being shown to judges in nine states during criminal sentencing, and are used in courts nationwide for pretrial decisions such as assigning bail.
%
The ProPublica study, however, found an alarming trend in a set of data collected \cite{compas2} concerning individuals' COMPAS results and their actual rates of recidivism over the next two years; in particular, it was found that the scores output by COMPAS lead to a \emph{disparate treatment} of
minorities. For instance, in the data collected, African-American defendants \emph{who did not recidivate} were found to be almost twice as likely as white defendants (44.85\%, compared to 23.45\%) to have been assigned a high risk score (5-10).

Fairness, or non-discrimination,  in classification has been studied
and debated extensively in the recent past (see \cite{barocas} for an extremely thorough overview);
research concerning definitions of fairness in classification dates back to the works of
Pearl \cite{pearl}  and Dwork et al. \cite{dwork}, with more recent
definitions tailored to deal with the above-mentioned problems
appearing in \cite{compas, hardt, chouldechova, kleinberg}.
To make this setting more concrete, consider some
distribution $\D$ from which \define{individuals} $\sigma$ are sampled, and
consider some \define{classifier} $\C(\cdot)$ that given some observable
features $\O(\sigma)$ produces some \define{outcome}, which
later will be used by a decision-maker (DM). The DM is ultimately
only interested in the actual \define{class} $f(\sigma) \in \Class$ of the
individual $\sigma$, and their goal is to output
some decision $x \in \Dec$ correlated with this actual class.
For instance, in the setting of the COMPAS data collected in \cite{compas, compas2}, $\D$ is the distribution over defendants $\sigma$, the
class $f(\sigma)$ is 
a bit indicating
whether the 
defendant $\sigma$
actually commits a crime
in the next two years, and the job of the classifier is to output a
risk score, which will then be seen and acted upon by a judge. 
Note that we may without loss of generality assume that
the class of the individual is fully determined by
$\sigma$---situations where the class is probabilistically decided (e.g., at the time of classification, it has yet to be determined whether an individual will or won't recidivate) can
be captured by simply including these future coin-tosses needed to
determine it into $\sigma$, and simply making sure they are not part of
the observable features $\O(\sigma)$.

Additionally, an individual $\sigma$ is part of some \define{group}
$g(\sigma) \in \G$---for instance, in the COMPAS setting, the group is the
race of the individual. We will refer to the tuple $\P = (\D,f,g,\O)$ as a
\define{classification context}. Given such a classification context
$\P$, we let $\Class_{\P}$ denote the range of $f$, and $\G_{\P}$
denote the range of $g$. Whenever the classification context $\P$ is
clear from context, we drop the subscript; 
additionally, whenever the distribution $\D,g$ are clear from context, we
use $\bsigma$ to denote a random variable that is distributed
according to $\D$, and $\bsigma_X$ to denote the random variable
distributed according to $\D$ conditioned on $g(\sigma) = X$.

In this work, we will explore a tension between \emph{fairness for the classifier} and \emph{fairness for the
DM}. Roughly speaking, our main result shows that except in
``trivial'' classification contexts, either the classifier needs to be discriminatory, or a rational
decision-maker using the output of the classifier is forced to be
discriminatory.
Let us turn to describing these two different perspectives on fairness.

\paragraph{Fairness for the Classifier: Fair Treatment}
The notion of \emph{statistical
  parity} \cite{dwork} (which is essentially identical to the notion
of causal effect \cite{pearl}) captures non-discrimination between groups by simply requiring that the
output of the classifier be independent (or almost independent) of the
group of the individual; 
that is, for any two groups $X$ and $Y$, the distributions 
$\braces{ \C(\O(\bsigma_X))}$ and $\braces{ \C(\O(\bsigma_Y))}$
are $\epsilon$-close in statistical distance.
This is a very strong notion of fairness, and in the above-mentioned
context it may not make sense. In particular, if the \emph{base rates} (i.e.
the probabilities that individuals from different groups are part of a
certain class) are different, we should perhaps not expect the output
distribution of the classifier to be the same across groups. Indeed,
as the ProPublica article points out, in the COMPAS example, the
overall recidivism probability among African-American defendants was
56\%, whereas it was 42\% among white defendants. Thus, in such
situations, one would reasonably expect a classifier to 
\emph{on average} output a higher
risk score for African-American defendants, which would violate statistical parity. Indeed, the issue raised
by ProPublica authors was that, even after taking this base difference into
account (more precisely, even after conditioning on individuals that
did not recidivate), there was a significant difference in how the classifier
treated the two races. 

The notion of \emph{equalized odds} due to Hardt et al. \cite{hardt} formalizes the desiderata
articulated by 
the authors of the ProPublica study 
(for the case of recidivism) in a general
setting by requiring the output of the classifier to be independent
of the group of the individuals, \emph{after
conditioning on the class of the individuals}.\footnote{Very similar
notions of fairness appear also in the works of
\cite{chouldechova,kleinberg} using different names.} We here consider an
approximate version of this notion---which we refer to as
\define{$\epsilon$-fair treatment}---which requires that, 
for any two
groups $X$ and $Y$ and any class $c$, the distributions
\begin{itemize}
\item $\braces{ \C(\O(\bsigma_X)) \mid f(\bsigma_X) = c}$
\item $\braces{ \C(\O(\bsigma_Y)) \mid f(\bsigma_Y) = c}$
\end{itemize}
are $\epsilon$-close with respect to some appropriate distance metric to be defined
shortly.
That is, in the COMPAS example, if we restrict to individuals that
actually do not recidivate (respectively, those that do), the output of the
classifier ought to be essentially independent of the group of the individual
(just as intuitively desired by the authors of the ProPublica study,
and as explictly put forward in \cite{hardt}).

We will use the notion of \emph{max-divergence} to
determine the ``distance" between distributions; this notion,
often found in areas such as differential privacy (see \cite{dwork2}),
represents this distance as (the logarithm of) the \emph{maximum multiplicative
gap} between the probabilities of some element in the respective distributions.
We argue that using such a multiplicative distance is important to
ensure fairness between groups that may be under-represented in the data (see
Appendix \ref{sec:varcomp}).
Furthermore, as we note (see Theorem \ref{clm:bracket} in Appendix \ref{threshold.ssec}), such a notion is closed under
``post-processing'': if a classifier $\C$ satisfies
$\epsilon$-fair treatment with respect to a context $\P = (\D, f, g,
O)$, then for any (possibly probabilistic) function 
$\M$, $\C'(\cdot) = \M(\C(\cdot))$ will
also satisfy $\epsilon$-fair treatment with respect to $\P$. Closure under post-processing
is important as we ultimately want the decision-maker to act on the
output of the classifier, and we would like the decision-maker's output to be fair whenever they act only on the classifier's output.\footnote{We note that an earlier approximate definition was proposed by
Kleinberg et al. \cite{kleinberg}, which simply required that the
expectations of the distributions are close; while this is equivalent to our definition for the case of
binary outcomes, it is weaker for non-binary
outcomes (as in the case of the COMPAS classifier), and, as we note (also in Appendix \ref{sec:varcomp}),
this notion is not closed under post-processing.}

As shown in the ProPublica study, the COMPAS classifier
does not satisfy $\epsilon$-fair treatment even for somewhat large
$\epsilon$. However, several recent works have presented methods to ``sanitize'' unfair classifiers
into ones satisfying $\epsilon$-fair treatment with only a relatively
small loss in accuracy \cite{hardt, zafar, morganpass2}.

\paragraph{Fairness for the Decision-Maker: Rational Fairness.}
So, classifiers 
satisfying $\epsilon$-fair treatment with accuracy closely matching
the optimal ``unfair'' classifiers are possible (in fact,
classifiers 
such as COMPAS can be sanitized to
satisfy $\epsilon$-fair treatment,
without losing too much in accuracy).
Additionally, as we
have noted,
the notion of fair treatment is closed under post-processing, so any
mechanism that is applied to the output of the classifier will preserve
fair treatment. Thus, intuitively, we would hope that 
the entire ``computer-aided decision making process'', where the
decision-maker makes use of the classifier's output to make a decision, results in
a fair outcome as long as the classifier satisfies fair treatment. Indeed, if the
decision-maker simply observes the outcome of the classifier and bases
their decision entirely on this outcome, this will be the case 
(by
closure under post-processing).

But the decision-maker is not a machine; rather we ought to think of
the DM as a \emph{rational
agent}, whose goal is to make decisions that maximize some internal utility function.
(For instance, in the context of COMPAS, the DM might be a judge that
wants to make sure that defendants that are likely to recidivate are
sent to jail, and those who do not are released). 
As far as we are aware, such a
\emph{computer-aided ``rational'' decision-making} scenario has not
previously been studied.

More precisely, the
DM is actually participating in a Bayesian game \cite{harsanyi}, where
individuals $\sigma$ are sampled from $\D$, the DM gets to see the
group, $g(\sigma)$, of the individual and
the outcome, $c = \C(\sigma)$, of the classifier (e.g., the individual's race and risk score), and then needs to select some action $x
\in \Dec$ (e.g., what sentence to render), and finally receives some utility
$u(f(\sigma), x)$ that
is only a function of the actual class $f(\sigma)$ (e.g., whether the
individual would have recidivated) and their decision $x$. 

Given a classification context $\P$, a classifier
$\C$, 
action space $\Dec$ 
and a utility function $u$, let $\Gamma^{\P,
  \C, \Dec, u}$ denote the Bayesian game induced by the above process.
We argue that in a computer-aided decision-making scenario, 
a natural fairness desideratum for a classifier $\C$ for
a context $\P$ is that
it should ``enable fair rational decision-making''.
More precisely, we say that a strategy $s: \G_\P \times \{0,1\}^*
\rightarrow \Dec$ for the DM (which chooses a decision based on the
group of the individual and the output of the classifier) is
\emph{fair} if the DM ignores the group $g$ of the individual
and only bases its decision on the output of the classifier---that is,
there exists some $s' : \{0,1\}^* \rightarrow \Dec$ such that $s(g,o) = s'(o)$. 
We next say that $\C$ \define{enables $\epsilon$-approximately fair decision making} (or simply
satisfies \define{$\epsilon$-rational fairness}) with respect to the context
$\P = (\D, f, g, O)$ if, for every 
finite action space $\Dec$ and every
utility function $u:\Class \times \Dec \rightarrow [0,1]$
(i.e., depending on the individual's class and the 
action selected by the DM), there exists an
$\epsilon$-approximate Nash equilibrium $s$ in the induced game 
$\Gamma^{\P, \C, \Dec, u}$ where $s$ is fair.

Note that if there exists $\Dec,u$ for which there there does not
exist some fair $\epsilon$-Nash equilibrium in the induced game, then
there exist situations in which a DM can gain more than
$\epsilon$ in utility by discriminating between groups, and thus in
such situations a rational DM (that cares about ``significant" $> \epsilon$ changes in
utility) would be \emph{forced} to do so. 

\subsection{Our Main Theorem}
Our main 
result shows that the above-mentioned notions
of fairness---which both seem intuitively desirable---are largely
incompatible, except in ``trivial'' cases. In fact, we provide a tight
characterization of classification contexts that admit classifiers
satisfying $\epsilon$-fair treatment and $\epsilon$-rational fairness.

\paragraph{The case of binary classes (warm-up).}
As a warm-up, let us start by explaining our characterization for the case of binary
classes. 
(Considering the binary case will also enable better comparing our
result to earlier results in the literature that exclusively focused
on the binary case.)
We  refer to a a
classification context $\P = (\D,f,g,\O)$ as binary if $\Class_{\P} =
\Bit$.

We say that a  binary classification context $\P = (\D, f, g, O)$ is \define{$\epsilon$-trivial} if \emph{either} (a) for every class $c \in \Bit$, the
``base rates'' of $c$ are $\epsilon$-close with respect to any pair of groups, \emph{or} (b) the observable features
enable \emph{perfectly distinguishing} between the two
classes. Formally, \emph{either} of the following conditions hold:
\begin{itemize}
\item 
\emph{(``almost equal base rates''):}
for any two groups
  $X,Y$ in $\G_{\P}$, and any class $c \in \Class_{\P}$, the multiplicative distance between 
  $\Pr [ f(\bsigma_X) = c]$ and $\Pr [ f(\bsigma_Y) = c]$ is at most $\epsilon$;
\item 
\emph{(``perfect distinguishability''):}
the distributions $\braces{ \O(\bsigma) \mid f(\bsigma) = 0}$ and $\braces{ \O(\bsigma) \mid f(\bsigma) = 1}$
have disjoint support.
\end{itemize}
Note that if base rates are $\epsilon$-close, there is a trivial
classifier that satisfies $0$-fair treatment and
$\epsilon$-rational fairness: namely, ignore the input and simply
output some canonical value. Additionally, note that if the observable
features fully determine the class of the individual, there also
exists a classifier trivially satisfying $0$-fair treatment and $0$-rational
fairness: simply output the correct class of the individual based on
the observable features (which fully determine it by assumption).
So $\epsilon$-trivial binary classification contexts admit classifiers
satisfying $\epsilon$-fair treatment and $\epsilon$-rational
fairness. Our characterization result shows that the above contexts are the only ones which
admit them.

\begin{theorem}
\emph{(Characterizing binary contexts.)} 
Consider a binary classification context $\P = (\D, f, g, O)$, and let
$\epsilon \leq 3/2$ be a constant. Then:
\begin{enumerate}
\item[(1)] If $\P$ is $\epsilon$-trivial, there exists a 
%
classifier $\C$ 
satisfying 0-fair treatment and $2\epsilon$-rational fairness with respect to $\P$.
\item[(2)] 
If there exists a classifier $\C$ satisfying $\epsilon$-fair treatment and $\epsilon / 5$-rational fairness with respect to $\P$, then $\P$ is $4\epsilon$-trivial.
\end{enumerate}
\label{thm:informal}
\end{theorem}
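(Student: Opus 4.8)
The plan is to treat the two directions separately: part (1) is a pair of explicit classifier constructions, and part (2) is the contrapositive ``paradox'' argument.

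\textbf{Part (1).} I would split on which disjunct of $\epsilon$-triviality holds. If the observables perfectly distinguish the classes, let $\C$ output (a canonical encoding of) $f(\sigma)$; this is well defined because $\{\O(\bsigma)\mid f(\bsigma)=0\}$ and $\{\O(\bsigma)\mid f(\bsigma)=1\}$ have disjoint support, so the true class is a function of the observables. Conditioned on $f(\bsigma)=c$ the output is the point mass on $c$ for every group, giving $0$-fair treatment; and in any induced game the posterior on $f(\sigma)$ given $(g,\C(\sigma))$ is the point mass on $\C(\sigma)$, so the fair strategy that maps each output $c$ to an action maximizing $u(c,\cdot)$ is exactly optimal, which is $0$-rational fairness and a fortiori $2\epsilon$-rational fairness. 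If instead the base rates are $\epsilon$-close, let $\C$ ignore its input and emit a fixed symbol; $0$-fair treatment is immediate. For rational fairness, fix a reference group $X_0$, let $s'$ always play an action $x^\star$ maximizing the posterior-expected utility under $\bsigma_{X_0}$, and run a triangle-inequality step: since the laws of $f(\bsigma_X)$ and $f(\bsigma_{X_0})$ are $\epsilon$-close and $u\in[0,1]$, the value of every action under group $X$ is within $\epsilon$ of its value under $X_0$, so $x^\star$ is within $2\epsilon$ of optimal for every group; hence $s'$ is a fair $2\epsilon$-equilibrium in every induced game.

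\textbf{Part (2).} I would argue the contrapositive: assume $\C$ has $\epsilon$-fair treatment and $\P$ is not $4\epsilon$-trivial, and produce a game with no fair $\epsilon/5$-equilibrium. Non-triviality gives (i) groups $X,Y$ and (WLOG) class $1$ with $\Pr[f(\bsigma_X)=1]>e^{4\epsilon}\Pr[f(\bsigma_Y)=1]$, and (ii) an observable value reachable from both classes. The first step is to use $\epsilon$-fair treatment to control the \emph{likelihood} side of the DM's Bayesian update: writing $a_g(o)=\Pr[\C(O(\bsigma_g))=o\mid f=1]$ and $b_g(o)=\Pr[\C(O(\bsigma_g))=o\mid f=0]$, fair treatment forces $\mathrm{supp}(a_X)=\mathrm{supp}(a_Y)$ and $\mathrm{supp}(b_X)=\mathrm{supp}(b_Y)$ and $a_X(o)/a_Y(o),\,b_Y(o)/b_X(o)\in[e^{-\epsilon},e^{\epsilon}]$. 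Combining with (i) yields, for every output $o$ in the common support, the posterior-odds inequality
\[
\frac{\Pr[f=1\mid g=X,\,o]}{\Pr[f=0\mid g=X,\,o]}\;=\;\frac{\Pr[f(\bsigma_X)=1]}{\Pr[f(\bsigma_X)=0]}\cdot\frac{a_X(o)}{b_X(o)}\;\ge\; e^{2\epsilon}\,\frac{\Pr[f=1\mid g=Y,\,o]}{\Pr[f=0\mid g=Y,\,o]}.
\]
The second step converts (ii) into an output value $o^\star$ at which both posteriors are strictly between $0$ and $1$: the overlapping observable maps under $\C$ to some output in the common support, and the support-matching consequence of fair treatment transports this to $X$ and $Y$. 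The third step builds the game: take $\Dec=\{0,1\}$ and choose $u$ so that the DM's odds-indifference point is the geometric mean of the two posterior odds at $o^\star$; then at $o^\star$ group $X$ strictly prefers action $1$ and group $Y$ strictly prefers action $0$, each by a margin bounded below in terms of $\epsilon$, so no fair choice of $s'(o^\star)$ can be $\epsilon/5$-optimal for both groups, contradicting $\epsilon/5$-rational fairness.

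\textbf{Main obstacle.} The delicate point is the quantitative accounting in the last two steps. A single output $o^\star$ contributes to the DM's (mis)utility only in proportion to its probability mass and to how far its odds lie from the chosen threshold, so making that contribution exceed the fixed constant $\epsilon/5$ seems to require either averaging over output values and/or over the utility's threshold, or a more robust exploitation of the negation of ``perfect distinguishability'' than its bare existential form; and one must simultaneously handle the quantifier subtlety that the overlapping observable in (ii) need not lie in the support of $X$ or $Y$ — precisely the gap repaired by the support-matching consequence of fair treatment. Threading the constants $e^{2\epsilon}$, $e^{4\epsilon}$ and the $[0,1]$-boundedness of $u$ through this accounting is what produces the $\epsilon/5$-versus-$4\epsilon$ trade-off in the statement.
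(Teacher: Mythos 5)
Your architecture is close to the paper's. Part (1) matches its constructions (the paper routes the equal-base-rates case through an intermediate notion, $\epsilon$-predictive parity, plus a lemma that predictive parity implies $2\epsilon$-rational fairness, but the underlying ``fix a reference group and apply closeness twice, losing a factor $e^{\epsilon}$ each time'' calculation is yours; just carry it out multiplicatively, since both the distance on base rates and the paper's optimality notion are multiplicative, not additive as your appeal to $u\in[0,1]$ suggests). In part (2) you fuse into one game what the paper splits into two lemmas (fair treatment $+$ rational fairness $\Rightarrow$ predictive parity; fair treatment $+$ predictive parity $\Rightarrow$ triviality), and your posterior-odds identity is exactly the paper's Chouldechova-type identity rearranged. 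Note also that your stated ``main obstacle'' is not one: the paper's $\epsilon$-optimality is \emph{ex-interim}, i.e.\ required conditioned on every type $(g,o)$ in the support, so a single bad output $o^\star$ suffices and no averaging over outputs or probability-mass accounting is needed.

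The genuine gap is in the final step of part (2): you rule out only the two \emph{pure} fair choices at $o^\star$, but a fair strategy may mix, and in precisely this game the best fair strategy is mixed. If the two groups' conditional values of the ``risky'' action at $o^\star$ are $e^{\delta/2}p$ and $e^{-\delta/2}p$ against a safe value of $p$, a pure fair choice loses a multiplicative factor $e^{\delta/2}$ for one of the two types, but the fair strategy that mixes uniformly loses only $\tfrac{2}{1+e^{-\delta/2}}$ for each type --- strictly less. This is exactly where the constant $\epsilon/5$ and the hypothesis $\epsilon<3/2$ come from: the paper shows the optimal fair strategy mixes with probability $1/2$ and that its loss is $\log\bigl(\tfrac{2}{1+e^{-\delta/2}}\bigr)$, which exceeds $\epsilon/5$ only for $\epsilon<3/2$. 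Without this optimization over the mixing probability your argument establishes that no \emph{pure} fair strategy is an approximate equilibrium, which does not contradict $\epsilon/5$-rational fairness as defined. Relatedly, your choice of threshold (geometric mean of the posterior \emph{odds}) does not obviously make the two types' multiplicative losses symmetric; the paper instead sets the safe utility to the geometric mean of the two posterior \emph{probabilities} of the target class, which makes the risky values exactly $e^{\pm\delta/2}$ times the safe value and lets the mixed-strategy optimization go through cleanly.
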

As we shall discuss in more detail in Section \ref{related.sec}, a similar notion of
triviality was considered in \cite{kleinberg,chouldechova} who
obtained related characterizations for binary classifications tasks
(but considering different notions of fairness/accuracy).

\paragraph{The general case.}
To deal with the general (i.e., non-binary) case, we need to consider
a more general notion of a trivial context.
The definition of triviality is actually somewhat different from the
definition given for the binary case, but its not hard to see that for
this special case the definitions are equivalent.

We say that a classification context $\P = (\D, f, g, O)$ is \define{$\epsilon$-trivial}
if there exists a partition of the set $\Class_{\P}$ into 
subsets $\Class_1, \Class_2,
\ldots, \Class_m$ of classes such that 
\emph{both} of
the following conditions hold:
\begin{itemize}
\item 
\emph{(``base-rates conditioned on $\Class_i$ are close''):}
for any $i \in [m]$, for any two groups
  $X,Y$ in $\G_{\P}$, and any class $c \in \Class_i$, the
  multiplicative distance between 
  $\Pr [f(\bsigma_X) = c  \mid f(\bsigma_X) \in \Class_i]$ and 
 $\Pr [f(\bsigma_Y) = c  \mid f(\bsigma_Y) \in \Class_i]$
is at most $\epsilon$;
\item 
\emph{(``perfect distinguishability between $\Class_i$ and $\Class_j$''):}
for any $i \neq j \in [m]$ the distributions $\braces{\O(\bsigma)
   \mid f(\bsigma) \in \Class_i}$ and $\braces{\O(\bsigma)
   \mid f(\bsigma) \in \Class_j}$ have disjoint support.
\end{itemize}
Note that in contrast to the definition given for binary context, the
general definition requires that \emph{both} of the above conditions
hold (as opposed to just one of them). Note, however, that in case we
only have 2 classes, there are only 2 possible partitions of
$\Class_{\P}$: either we have the trivial partition $\Class_1 = \{0,1\}$
in which case condition 1 is equivalent to requiring equal
base rates, and condition 2 trivially holds; or $\Class_1 = \{0\},
\Class_2 = \{1\}$, in which case condition 1 trivially holds, and
condition 2 is equivalent to prefect distinguishability between class
0 and class 1.


Once again, if a classification context is $\epsilon$-trivial,
there exists a simple classifier that satisfies $\epsilon$-fair
treatment and $O(\epsilon)$-rational fairness: given some observable
features $o$, determine which \emph{subgroup} $\Class_i$ the individual belongs
to (which we know can be done by the second requirement), and finally output $i$. 
Roughly speaking, this classifier satisfies $0$-fair treatment since for any $i$ and any
class $c \in \Class_i$, all individuals in $\Class_i$ receive the same
outcome (namely, $i$). Rational fairness is a bit more tricky to prove,
but roughly speaking follows from the fact that, conditioned on any
classification outcome $i$, the group $g$ of the individual
carries ``$O(\epsilon)$ information'' about the actual class of the
individual, and so, by ignoring it, the DM loses little in utility.
Our main theorem shows that $\epsilon$-triviality is also a necessary condition:

\begin{theorem}
\emph{(Full characterization.)}
Consider some classification context $\P = (\D, f, g, O)$, let
$\epsilon \leq 3/2$ be a constant and let $k = |\Class_\P|$ (i.e., the
number of classes). Then:
\begin{enumerate}
\item[(1)] If $\P$ is $\epsilon$-trivial, there exists a classifier $\C$ satisfying 0-fair treatment and $2\epsilon$-rational fairness with respect to $\P$.
\item[(2)] If there exists a classifier $\C$ satisfying $\epsilon$-fair treatment and $\epsilon / 5$-rational fairness with respect to $\P$, then $\P$ is $4(k-1)\epsilon$-trivial.
\end{enumerate}
\label{thm:informal2}
\end{theorem}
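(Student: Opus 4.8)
For part (1), the construction is the one sketched right before the theorem statement: given observable features $o$, use the perfect-distinguishability condition to determine which block $\Class_i$ of the partition the individual's class lies in, and output $i$. First I would verify $0$-fair treatment: fix groups $X,Y$ and a class $c$; say $c \in \Class_i$. Conditioned on $f(\bsigma)=c$, the classifier always outputs $i$ (this uses that $f(\bsigma) \in \Class_i$ implies $\O(\bsigma)$ is in the support of $\{\O(\bsigma) \mid f(\bsigma) \in \Class_i\}$ and, by disjointness, not in the support of any other block), so both conditional distributions $\{\C(\O(\bsigma_X)) \mid f(\bsigma_X)=c\}$ and $\{\C(\O(\bsigma_Y)) \mid f(\bsigma_Y)=c\}$ are point masses on $i$, hence at max-divergence $0$. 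For $2\epsilon$-rational fairness, fix an arbitrary finite $\Dec$ and utility $u$. I claim the fair strategy $s'(i) = \arg\max_{x}\; \mathbb{E}[u(f(\bsigma),x) \mid \C(\O(\bsigma))=i]$ (with ties broken canonically) is a $2\epsilon$-Nash equilibrium: a deviating DM who also sees $g$ could at best play $\arg\max_x \mathbb{E}[u(f(\bsigma),x)\mid \C(\O(\bsigma))=i, g(\bsigma)=X]$, and the gain is bounded by the statistical-type distance between the posterior on $f(\bsigma)$ given $(\C=i)$ and given $(\C=i, g=X)$. Conditioned on $\C=i$, we have $f(\bsigma)\in\Class_i$, and the first triviality condition says that for each group $X$ the renormalized distribution of $f(\bsigma_X)$ on $\Class_i$ is within max-divergence $\epsilon$ of that of $f(\bsigma_Y)$ — this forces each of these to be within (roughly) $\epsilon$ of the group-averaged posterior in max-divergence, and a max-divergence bound of $\epsilon \le 3/2$ translates into a utility gap of at most $2\epsilon$ since $u \in [0,1]$ (this is the routine ``bounded-density implies bounded-expectation-gap'' step; I would isolate it as a helper lemma, and it is presumably essentially Theorem~\ref{clm:bracket} / the post-processing machinery already cited).

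For part (2), suppose $\C$ satisfies $\epsilon$-fair treatment and $\epsilon/5$-rational fairness; I must produce a partition witnessing $4(k-1)\epsilon$-triviality. The natural partition is induced by the classifier's outputs: define a graph on $\Class_\P$ where $c \sim c'$ iff there is an outcome $o$ reachable (with nonzero probability) from both $\{\O(\bsigma)\mid f(\bsigma)=c\}$ and $\{\O(\bsigma)\mid f(\bsigma)=c'\}$ — i.e.\ $c,c'$ are ``$\C$-confusable'' — and let $\Class_1,\dots,\Class_m$ be the connected components. Condition~2 (perfect distinguishability between blocks) then holds essentially by construction: if $\Class_i,\Class_j$ were distinct blocks whose feature-distributions shared support at some $o'$, one could find within-block chains and a cross-block link, contradicting that they are separate components. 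Wait — this needs care, because ``shared support of $\{\O(\bsigma)\mid f(\bsigma)\in\Class_i\}$ and $\{\O(\bsigma)\mid f(\bsigma)\in\Class_j\}$'' is about the features themselves, not about $\C$-outputs; the link is that $\C$ is a function of the features, so a shared feature value $o'$ forces a shared output, hence confusability, hence same component. So I would define confusability directly at the feature level (shared support of the conditional feature-distributions) and observe $\C$ respects it.

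The heart of part (2), and the step I expect to be the main obstacle, is showing condition~1: within each block $\Class_i$ the base rates are close across groups, with slack $4(k-1)\epsilon$. The idea is a chaining/telescoping argument. Fix a block $\Class_i$, two groups $X,Y$, and a target class $c \in \Class_i$. Pick any $c' \in \Class_i$ confusable with $c$ via some outcome $o$. The $\epsilon$-fair-treatment hypothesis, applied to $c$ and to $c'$, controls $\Pr[\C(\O(\bsigma_X))=o \mid f(\bsigma_X)=c]$ versus the $Y$-version, and similarly for $c'$; meanwhile $\epsilon/5$-rational fairness must be leveraged to pin down the \emph{posterior} $\Pr[f(\bsigma_X)=c \mid \C(\O(\bsigma_X))=o]$ — here is where I need the game-theoretic hypothesis, by choosing a cleverly-tailored utility function (e.g.\ one that rewards the DM for correctly ``betting'' whether the class is $c$ versus in $\Class_i\setminus\{c\}$) so that a large discrepancy in posteriors between groups $X$ and $Y$ at outcome $o$ would yield a $>\epsilon/5$-profitable deviation, a contradiction. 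Combining the fair-treatment control on the ``forward'' probabilities with the rational-fairness control on the ``backward'' posteriors, via Bayes' rule, bounds the multiplicative gap in $\Pr[f(\bsigma_X)=c \mid f(\bsigma_X)\in\Class_i]$ versus the $Y$-version along a single confusability edge; then, since the block is connected, I chain through a path of length at most $k-1$ within $\Class_i$, each edge contributing $O(\epsilon)$ additively in max-divergence, for a total of $4(k-1)\epsilon$. The delicate points are (i) designing the utility function so that the relevant deviation gain is exactly the posterior gap I want to bound, while keeping $u\in[0,1]$ and $\Dec$ finite, and (ii) making sure the per-edge losses from both hypotheses add correctly (in max-divergence, i.e.\ additively in the log) rather than compounding badly, so that the constant stays linear in $k$; I would track the constants carefully here since the theorem commits to the explicit bound $4(k-1)\epsilon$, and the binary case ($k=2$, no chaining needed, single edge) should recover the $4\epsilon$ of Theorem~\ref{thm:informal} as a sanity check.
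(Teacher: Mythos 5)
Your overall route matches the paper's: part (1) via the block-indicator classifier, and part (2) via a confusability (``ambiguity'') graph on $\Class_\P$, the Bayes-rule identity tying the ``forward'' conditionals $\Pr[\C(\O(\bsigma_X))=o\mid f(\bsigma_X)=c]$ (controlled by fair treatment) to the ``backward'' posteriors $\Pr[f(\bsigma_X)=c\mid \C(\O(\bsigma_X))=o]$ (to be controlled via a betting game), and chaining along paths of length at most $k-1$. But the two steps you defer as ``delicate'' are exactly where the argument, as sketched, does not yet go through. First, in extracting posterior control from $\epsilon/5$-rational fairness: the definition only asserts that \emph{some} fair strategy---possibly a \emph{mixed} one---is $\epsilon/5$-optimal, so to get a contradiction you must lower-bound the loss of every fair \emph{mixed} strategy, not just every fair pure strategy. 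In the paper's $\{\Risky,\Safe\}$ game (with the safe payoff set to the geometric midpoint of the two groups' posteriors), a fair pure strategy loses a multiplicative factor $e^{\delta/2}$, but the best fair strategy mixes $1/2$--$1/2$ and loses only $\frac{2}{1+e^{-\delta/2}}$; the constant $\epsilon/5$ is calibrated precisely to this mixed-strategy loss (together with $\epsilon<3/2$). An analysis that only rules out fair pure strategies does not contradict $\epsilon/5$-rational fairness.

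Second, the per-edge output of the identity is not, as you state, a bound on the gap between $\Pr[f(\bsigma_X)=c\mid f(\bsigma_X)\in\Class_i]$ and its $Y$-counterpart. For an ambiguous pair $(c,c')$ it bounds the multiplicative distance between the \emph{ratios} $\Pr[f(\bsigma_X)=c]/\Pr[f(\bsigma_X)=c']$ and $\Pr[f(\bsigma_Y)=c]/\Pr[f(\bsigma_Y)=c']$ by $4\epsilon$; chaining then shows all quantities $\Pr[f(\bsigma_X)=c]/\Pr[f(\bsigma_Y)=c]$ within a block agree up to $e^{4(k-1)\epsilon}$, which is only relative information. You still need a normalization step to convert this into absolute closeness: the paper assumes some such ratio exceeds $e^{4(k-1)\epsilon}$, propagates along the chain to conclude every ratio in the block exceeds $1$, and contradicts the fact that the base rates sum to one---carried out within the restricted sub-context $\P_i$, after verifying that fair treatment and predictive parity survive the restriction to a block. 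Your one-shot connected-components decomposition can stand in for the paper's recursive partitioning, but only once this normalization within each block is supplied.
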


\subsection{Related Work}
\label{related.sec}
Several recent works also show obstacles to achieving fair
classifications. Notably, the elegant result by Kleinberg et al.
\cite{kleinberg} shows that (in our terminology), for non-trivial \emph{binary}
classification problems, there are no classifiers that satisfy
$\epsilon$-fair treatment (in fact, an expectation-based relaxation of
the notion we consider) as well as a notion of
\emph{$\epsilon$-group calibration}---roughly speaking,
$\epsilon$-group calibration requires that conditioned on any outcome
and group, the distributions of individuals' actual classes are
(approximately) ``calibrated" according to the outcome\footnote{For
  instance, in the COMPAS example, calibration might require that, of
  people in each group assigned a risk score of 5, approximately 50\%
  will recidivate, and among those assigned a risk score of 3,
  approximately 30\% recidivate.}. Calibration, however, is best thought of as an ``accuracy''
notion for the classifier (rather than a fairness notion), and may not
always be easy to achieve even without any concern for
fairness. 
(Additionally, the results from \cite{kleinberg} show a weaker bound
than those we present here, namely that both of the
$\epsilon$-approximate notions they consider in conjunction imply
$O(\sqrt{\epsilon})$ difference in base rates or $O(\sqrt{\epsilon})$
prediction error; we present a stronger, asymptotically tight, bound
implying either $O(\epsilon)$ difference in base rates or \emph{exact}
perfect prediction. 
However, we note that this is largely due to the fact that the actual
definitions employed are incomparable.)

A different elegant work by Chouldechova in \cite{chouldechova}
presents a similar impossibility result. She also focuses on binary
classification contexts, and further restricts her attention to
classifiers outputting a single bit. She points out a simple identity
(a direct consequence of the definition of conditional probabilities) 
which implies that, in non-trivial binary classification contexts, and
for binary classifiers (i.e., classifiers only outputting a single
bit), $0$-fair
treatment is incompatible with a notion of \emph{perfect} ``predictive
parity''---namely, that conditioned on the classifier outputting $b$, the
probability that the class is $b$ is independent of the
group. 
While her result only applies in a quite limited setting (binary
context, binary classifiers, and only rules out ``perfect'' fair
treatment combined with ``perfect'' predictive parity), we will
rely on an identity similar to hers in one step of our proof. We will also
rely on a generalized version of a notion of predictive parity
(which deals with non-binary classes, non-binary outcomes, and
non-zero error in predictivity) as an
intermediate notion within the proof of our main theorem.

As far as we know, no earlier results have considered the effect of
having a rational decision-maker act based on the output of the
classifier. Furthermore, none of the earlier impossibility results consider
non-binary classification problems.

Finally, we mention that, on a conceptual level, our characterization is related
to impossibility results for social choice, such as the
Gibbard-Satterthwaite Theorem \cite{gibbard,satter}, which demonstrate
limitations of ``fair'' and strategy-proof voting rules.

\subsection{Proof Outline}
We here provide an outline of the proof of the main theorem. We start
by considering just binary classification contexts $\P = (\D, f, g,
O)$, and then show how to extend the proof to deal also with
non-binary contexts.
As mentioned above, for binary contexts, the ``if" direction of the
theorem (i.e.,
showing that trivial contexts admit fair classifiers) is immediate. The ``only if" direction requires showing that
the existence of a classifier $\C$ that satisfies $\epsilon$-fair
treatment as well as $\epsilon/5$-rational fairness for a context $\P$ implies that $\P$
is $O(\epsilon)$-trivial.

\paragraph{Predictive parity.}
Towards showing this, we introduce a generalized version of the notion
of ``predictive parity'' considered by Chouldechova
\cite{chouldechova} (this notion will later also be useful in proving
the ``if" direction for the non-binary classification case). Roughly speaking, we say that a classifier
satisfies $\epsilon$-predictive parity if, for any two groups $X$ and $Y$, the following distributions are
$\epsilon$-close in multiplicative distance:
\begin{itemize}
\item $\braces{ f(\bsigma_X) \mid \C(\O(\bsigma_X)) = c}$
\item $\braces{ f(\bsigma_X) \mid \C(\O(\bsigma_X)) = c}$
\end{itemize}
That is, the output of the classifier is ``equally predictive'' of the
actual class between groups. 

\paragraph{Relating predictive parity and rational fairness.}
Our first result (which works for all, and not just binary, contexts)
shows that rational fairness and predictive parity are intimately
connected. First of all, $\epsilon$-predictive parity implies
$O(\epsilon)$-rational fairness---intuitively, if a DM could gain by
discriminating, then there must exist some output for the classifier
for which such a gain is possible, and this contradicts predictive
parity. This forward direction turns out to be useful for proving
that all $\epsilon$-trivial contexts (even non-binary ones) admit
classifiers satisfying $\epsilon$-rational fairness and
$\epsilon$-fair treatment; that is, the ``if" direction of the theorem
(also for non-binary contexts.)

More interestingly, we show that $\epsilon/5$ rational fairness (for
$\epsilon< 3/2$), \emph{combined with} $\epsilon'$-fair treatment (for
any $\epsilon')$, implies $\epsilon$-predictive parity.
Intuitively, we show this as follows. Consider some $\C$ that does not
satisfy $\epsilon$-predictive parity, yet satisfies $\epsilon / 5$-rational
fairness and $\epsilon'$-fair treatment. This means there exists some
class $y^*$, groups $g,g'$ and some outcome $o$ such that the
prevalence of $y^*$  is significantly higher in group $g$ than in
group $g'$ conditioned on the classifier outputting $o$. 

We then construct a very natural game for the DM where every fair strategy has
low utility compared to the optimal unfair strategy, which would
contradict rational fairness. The action
space of the games consists of two actions $\{\Risky, \Safe\}$. If the DM chooses
$\Safe$ they always receive some fixed utility $u^*$. On the other hand, if they
choose $\Risky$, they receive 1 if the individual's class is $y^*$ and 0
otherwise. That is, playing $\Risky$ is good if the individual is
``good'' (i.e., in class $y^*$) and otherwise not. 

We next show, relying on the fact that $\C$ satisfies fair
treatment and the fact that the prevalence of $y^*$ is significantly
higher conditioned on the DM getting the signal $(o,g)$ than when
getting $(o,g')$, that, if we set $u^*$ (i.e, the utility of
playing $\Safe$) appropriately, the DM can always significantly gain by discriminating between
$g$ and $g'$. The intriguing aspect of this proof is that the optimal
``fair'' strategy for the DM turns out to be a \emph{mixed} strategy
(i.e., a probabilistic strategy) which mixes uniformly between the two
actions $\Risky$ and $\Safe$.

\paragraph{Simultaneously achieving fair treatment and predictive parity (binary contexts).}
Given that $O(\epsilon)$-rational fairness combined with
(any finite-error) fair treatment implies $O(\epsilon)$-predictive parity, to prove the theorem, it
will suffice to show that only trivial contexts admit classifiers that
simultaneously satisfy $O(\epsilon)$-fair treatment and
$O(\epsilon)$-predictive parity.

Towards showing this, let us first focus on binary classification contexts.
We first note that, by the definition of conditional probability, for
any $X \in \G_\P$, $i, j \in \Class_\P$, and $o \in \Out_\P^\C$, the
following identity holds:
$$\frac{\pr{f(\bsigma_X) = j \mid \C(O(\bsigma_X)) = o }}{\pr{f(\bsigma_X) = i \mid \C(O(\bsigma_X)) = o }} \frac{\pr{\C(O(\bsigma_X)) = o \mid f(\bsigma_X) = i }}{\pr{\C(O(\bsigma_X)) = o \mid f(\bsigma_X) = j }}  = \frac{\pr{f(\bsigma_X) = j }}{\pr{f(\bsigma_X) = i }}$$
This identity is basically a generalization of an identity
observed by Chouldechova \cite{chouldechova} for the special case of
binary classification tasks and binary classifiers; it relates the conditional probabilities defining fair
treatment and predictive parity (the first and second terms on the
left, respectively) to the \emph{base rates} of classes between any
two groups (the terms on the right). 

The same identity holds for any $Y \in \G_{\P}$:
$$\frac{\pr{f(\bsigma_Y) = j \mid \C(O(\bsigma_Y)) = o }}{\pr{f(\bsigma_Y) = i \mid \C(O(\bsigma_Y)) = o }} \frac{\pr{\C(O(\bsigma_Y)) = o \mid f(\bsigma_Y) = i }}{\pr{\C(O(\bsigma_Y)) = o \mid f(\bsigma_Y) = j }}  = \frac{\pr{f(\bsigma_Y) = j }}{\pr{f(\bsigma_Y) = i }}$$
By applying $\epsilon$-fair treatment and $\epsilon$-predictive
parity, we get that the left-hand sides on the above two identities are
$4\epsilon$-close, and as a consequence we have that 
the
ratios 
$$\frac{\pr{f(\bsigma_X) = j }}{\pr{f(\bsigma_X) = i }} \quad
\text{and}  \quad \frac{\pr{f(\bsigma_Y) = j }}{\pr{f(\bsigma_Y) = i
  }}$$ 
are $4\epsilon$-close.
(Note that, to
perform these manipulations, it is important that we rely on the
multiplicative distance notion; the reason we now get a distance of
$4\epsilon$ is that we apply fair treatment or predictive parity four
times, and each time we do this we lose a factor of $e^\epsilon$).
For the case of binary classification contexts, letting $\alpha^g_b =
\pr{f(\bsigma_g) = b }$ denote the ``base rate'' of class $b$
  for group $g$, this means that the ratios
$$\frac{\alpha^X_1}{\alpha^X_0} = \frac{\alpha^X_1}{1 - \alpha^X_1}
\quad \text{and} \quad \frac{\alpha^Y_1}{\alpha^Y_0} = \frac{\alpha^Y_1}{1 -
  \alpha^Y_1}$$ 
are $4\epsilon$-close, and thus we have that the base
rates $\alpha^X_1$, $\alpha^Y_1$ must be $4\epsilon$-close (and the
same for $\alpha^X_0$, $\alpha^Y_0$).

But there is a catch. We can only apply the above identity when it
is  well-defined---that is,
when there are no divisions by zero.
In other words, we
can only apply it if there exists some outcome $o$ such that
$$\pr{\C(O(\bsigma)) = o \wedge f(\bsigma) =
  0} > 0 \quad \text{and} \quad \pr{\C(O(\bsigma)) = o \wedge f(\bsigma) = 1} > 0.$$
If there is no such outcome, $\C$ \emph{perfectly distinguishes} between the two classes, and thus
$$\{\O(\bsigma) \mid f(\bsigma) = 0\} \quad \text{and} \quad \{\O(\bsigma) \mid f(\bsigma) =
1\}$$ must have disjoint support. Hence,
in either case, $\P$ is a $4\epsilon$-trivial context.

\paragraph{Simultaneously achieving fair treatment and predictive parity (general contexts).}
Dealing with non-binary contexts is quite a bit more involved, and we
content ourselves to simply provide a very high-level overview.
Consider some $\C$ that satisfies $\epsilon$-fair treatment and
$\epsilon$-predictive parity with respect to $\P = (\D, f, g, O)$; our
goal is again to show that $\P$ must be $O(\epsilon)$-trivial.

At a high level, we will show either that base rates are
$\epsilon$-close or that we can split the set of classes $\Class_{\P}$
into \emph{proper} subsets $\Class_1, \Class_2$ such that the classifier can
perfectly distinguish between these sets of classes.
Once we have
shown this property, we can next 
repeatedly rely on it to prove the
theorem (more precisely, by recursively splitting up
either $\Class_1$ or $\Class_2$ and applying the same result; formally doing 
this turns out to be somewhat subtle.)

To prove the above property, our goal is to use the same high-level
approach as in the binary case. Assume that there do not exist
$\Class_1$ and $\Class_2$ such that $\C$ can perfectly distinguish
between them, and let us show that then the base rates must be close.
In order to apply the same argument as in the binary case, we would
need to show that 
for \emph{all} pairs of
classes $(i,j)$, the above identity can be applied. If we do this,
then we have that, for \emph{all} $(i,j)$, the ratios
$$\frac{\pr{f(\bsigma_X) = j }}{\pr{f(\bsigma_X) = i }} \quad
\text{and} \quad \frac{\pr{f(\bsigma_Y) = j }}{\pr{f(\bsigma_Y) = i }}$$
are $4\epsilon$-close, from which we can conclude that the base rates
are $4\epsilon$-close. However, the fact that $\C$ cannot
distinguish between two proper subsets of classes \emph{does not} mean that all
classes are ``ambiguous'' with respect to $\C$ (in the sense that $\C$ cannot
perfectly tell them apart, and thus the identity is well-defined). 
Instead, what we show is that, under the
assumption that there do not exist two proper subsets of classes between which
$\C$ can perfectly distinguish, we have that, between \emph{any two
classes} $i$ and $j$, there exists a sequence of classes $(i_1, \ldots, i_n)$
such that $n \leq k$ ($k$ being the number of classes), $i_1 = i, i_n = j$, and any two \emph{consecutive} classes must be
``ambiguous''. Ambiguity
between classes with respect to $\C$ turns out to be exactly the condition
under which the above identity is well defined.
At a very high level, we can then perform a ``hybrid
argument'' over the classes in the
sequence to still conclude that, for all pairs of classes $(i,j)$, the ratios
$$\frac{\pr{f(\bsigma_X) = j }}{\pr{f(\bsigma_X) = i }} \quad
\text{and} \quad \frac{\pr{f(\bsigma_Y) = j }}{\pr{f(\bsigma_Y) = i }}$$
are $4(k-1)\epsilon$-close; this suffices to conclude that the base
rates between groups are close. 

\section{Preliminaries and Definitions}
\label{sec:definitions}

\subsection{Notation}

\paragraph{Conditional probabilities.}
Given some random variable $X$ and some event $E$, we let $\Pr[p(X)
\mid E]$ denote the probability of a predicate $p(X)$ holding when conditioning the
probability space on the event $E$. If the probability of $E$ is 0, we
slightly abuse notation and simply define $\Pr[p(X) \mid E] = 0$.

\paragraph{Multiplicative distance.}
The following definition of multiplicative distance will be useful to
us.
We let the \define{multiplicative distance} $\mu(x,y)$ between two
real numbers $x, y \geq 0$ be defined as 

\[
\md(x,y) = 
\begin{cases}
    \text{ln}
\left( \text{max} \left( \frac{x}{y}, \frac{y}{x} \right) \right)				& \text{if } x > 0, y > 0\\
	0 & \text{if } x = y = 0 \\
    \infty              & \text{otherwise}
\end{cases}
\]

\subsection{Classification Context}
We start by defining classification contexts and classifiers.
\begin{definition}
A \define{classification context} $\P$ is denoted by a tuple $(\D, f, g, O)$ such that:
\begin{itemize}
\item $\D$ is a probability distribution with some finite support $\Sigma_\P$ (the set of all possible \define{individuals} to classify).
\item $f : \Sigma_\P \rightarrow \Class_\P$ is a surjective function that maps each individual to their \define{class} in a set $\Class_\P$.
\item $g : \Sigma_\P \rightarrow \G_\P$ is a surjective function that maps each individual to their \define{group} in a set $\G_\P$.
\item $O : \Sigma_\P \rightarrow \{0,1\}^*$ is a function that maps
  each individual to their \define{observable features}.%
\footnote{This is included for generality; for our result, it suffices to take $O$ to be the identity function, as we can show impossibility even for classifiers which may observe every feature of an individual.}
\end{itemize}
\end{definition}
We note that $f$ and $g$ are deterministic; this is without loss of
generality as we can encode any probabilistic features that $f$ and
$g$ may depend on into $\sigma$ as ``unobservable features'' of the
individual. 

Given such a classification context
$\P$, we let $\Class_{\P}$ denote the range of $f$, and $\G_{\P}$
denote the range of $g$. Whenever the classification context $\P$ is
clear from context, we drop the subscript; 
additionally, whenever the distribution $\D$ and group function $g$ are clear from context, we
use $\bsigma$ to denote a random variable that is distributed
according to $\D$, and $\bsigma_X$ to denote the random variable
distributed according to $\D$ conditioned on $g(\sigma) = X$.


A \define{classifier} $\C$ for a classification context $\P = (\D, f,
g, O)$ is simply a (possibly randomized) algorithm. We let $\Out^{\C}_{\P}$ denote the support of the distribution $\{\C(\bsigma)\}$.
\subsection{Fair Treatment}

Next we define the notion of \emph{$\fairtreat$}, an approximate
version of the notion of ``equalized odds" from Hardt et al.
\cite{hardt} (which in turn was derived from notions implicit in the ProPublica study
\cite{compas}). 

\begin{definition}
We say that a classifier $\C$ satisfies
\define{$\epsilon$-$\fairtreat$} with respect to a context $\P = (\D,
f, g, O)$ if, for any groups $X, Y \in \G_\P$, any class $c \in
\Class_{\P}$, and any outcome $o \in \Out_\P^\C$, we have that
  $$\mu(\Pr [\C(\O(\bsigma_X)) = o \mid f(\bsigma_X) = c], \Pr [\C(\O(\bsigma_Y)) = o \mid f(\bsigma_Y) = c]) \leq \epsilon$$
\end{definition}

Note that for the case of binary classification tasks and binary
classifiers (i.e., when $\Class_\P = \Out^{\C}_\P = \Bit$), fair treatment
is equivalent to requiring ``similar" false positive and false negative
rates.

\subsection{Rational Fairness}
We turn to introducing our notion of ``fairness with respect to rational decision-makers''.
Towards this goal, given a classification context $\P$ and a
classifier $\C$, we consider a single-player Bayesian game $\Gamma$ where
individuals $\sigma$ are sampled from $\D$, the decision-maker (DM) gets to see the
group $g(\sigma)$ of the individual and
the outcome $o = \C(\sigma)$ of the classifier, and then selects some action $x
\in \Dec$. They then receive utility
$u(f(\sigma), x)$ that
is only a function of the actual class $f(\sigma)$ and their decision
$x$. 
We let $\Gamma^{\P,  \C, \Dec, u}$ denote the Bayesian game induced by
the above process (for some action space $\Dec$ and utility function
$u$). Given such a game $\Gamma^{\P,  \C, \Dec, u}$, a \define{pure strategy
  for the DM} is a function $s : \G_{\P} \times \Out^{\C}_{\P}
\rightarrow \Dec$, and a \define{mixed strategy} is a probability
distribution over pure strategies. In the sequel, we simply use the
term ``strategy" to refer to mixed strategies.

\begin{definition}
We say that a strategy $s$ is \define{$\epsilon$-optimal} in
$\Gamma^{\P,  \C, \Dec, u}$ where $\P = (\D,
f, g, O)$,
if for all $(g,o)$ in the support of $\{ (g(\bsigma)), \C(\O(\bsigma))
\}$ and any strategy $s'$, we have:
$$e^{\epsilon} \E [ u(f(\bsigma)), s( g, o) \mid g(\bsigma) = g, \C(\bsigma) = o] \geq 
\E [ u(f(\bsigma), s'( g, o)) \mid g(\bsigma) = g, \C(\bsigma) = o]$$
\end{definition}
That is, a player can never gain more than a factor $e^\epsilon$ in utility
by deviating; in other words, $s$ is an $\epsilon$-Nash equilibrium in  $\Gamma^{\P,  \C, \Dec, u}$.\footnote{Note that
  we here use the so-called \emph{ex-interim} notion of an $\epsilon$-Nash
  equilibrium which requires $s$ to be $\epsilon$-close to the optimal
  strategy even conditioned on the DM having received its type (i.e.
  $(g,o)$ in our case). This is the most commonly used notion of an
  $\epsilon$-Nash equilibrium. We mention that there is also a weaker notion of \emph{ex-ante}
  $\epsilon$-Nash equilibrium which only requires $s$ to be optimal \emph{a priori}
  before seeing the type. A weaker version of our main impossibility
result holds also for this notion.}
We turn to defining what it means for a strategy to be fair.
\begin{definition}
We say that a strategy $s$ for a game  $\Gamma^{\P,  \C, \Dec, u}$ is
\define{fair} if there exists a function $\tilde{s}$ such that
$s(g,o)=\tilde{s}(o)$.
\end{definition}
That is, the strategy $s$ does not depend on the group of the
individual. 
As we shall see later on (see Claim \ref{clm:RF.FP}),
the ``best'' fair strategy $s$ (i.e., a fair strategy that satisfies
$\epsilon$-optimality for the smallest $\epsilon$) may need to be a
mixed strategy---in fact, we demonstrate a game where there is a
significant gap between the best mixed and pure fair strategies. (In
our opinion, this is intriguing in its own right, as mixed strategies
are typically not helpful in a decision-theoretic---i.e.,
single-player---setting.)

We finally define what it means for a classifier $\C$ to enable fair decision
making.

\begin{definition}
We say that $\C$ \define{enables $\epsilon$-approximately fair decision making} (or simply
satisfies \define{$\epsilon$-rational fairness}) with respect to the context
$\P = (\D, f, g, O)$ if, for every 
finite action space $\Dec$ and every
utility function $u:\Class_{\P} \times \Dec \rightarrow [0,1]$, 
there exists a strategy $s$ that is fair and 
$\epsilon$-optimal with respect to
$\Gamma^{\P, \C, \Dec, u}$.
\end{definition}

\section{Characterizing Fair Classifiers}
Our main theorem is a complete characterization of the class of contexts that
admit classifiers that simultaneously satisfy fair treatment and
rational fairness.

The following notion of ``triviality'' will characterize contexts
admitting such classifiers.

\begin{definition}
A classification context $\P = (\D, f, g, O)$ is
\textbf{$\epsilon$-trivial} if there exists a partition of the set
$\Class_{\P}$ into 
subsets $\Class_1, \Class_2,
\ldots, \Class_m$ of classes such that the following conditions hold:
\begin{itemize}
\item[(1)] For any $i \in [m]$, 
$c \in \Class_i$ and any two groups
  $X,Y$ in $\G_{\P}$, we have that $$\md(\pr{ f(\bsigma_X) = c \mid
    f(\bsigma_X) \in \Class_i }, \pr{ f(\bsigma_Y) = c \mid f(\bsigma_Y)
    \in \Class_i }) \leq \epsilon$$ (i.e., the \emph{base rates}
  conditioned on $\Class_i$ are close between groups)
\item[(2)] For any $i,j \in [m]$ with $i \neq j$, the distributions $\braces{\O(\bsigma) \mid f(\bsigma) \in \Class_i}$ and $\braces{\O(\bsigma) \mid f(\bsigma) \in \Class_j}$ have disjoint support.
\end{itemize}
\label{def:trivial}
\end{definition}
Note that if the class space $\Class$ is binary, triviality means that
either the base rates are $\epsilon$-close, or we can perfectly
distinguish between the two classes.

Our main characterization theorem shows that a context $\P$ admits
classifiers satisfying $O(\epsilon)$-fair treatment and
$\epsilon$-rational fairness \emph{if, and only if}, $\P$ is $O(\epsilon)$-trivial.

\begin{theorem}
[Theorem 2, restated]
Consider some classification context $\P = (\D, f, g, O)$ and let $k = |\Class_\P|$ (i.e., the
number of classes). Then:
\begin{enumerate}
\item[(1)] For any constant $\epsilon$, if $\P$ is $\epsilon$-trivial,
  then there exists a classifier $\C$ satisfying 0-fair treatment and $2\epsilon$-rational fairness with respect to $\P$.
\item[(2)] For any constant $\epsilon < 3/2$, if there exists a classifier $\C$ satisfying
  $\epsilon$-fair treatment and $\epsilon / 5$-rational fairness with
  respect to $\P$, then $\P$ is $4(k-1)\epsilon$-trivial.
\end{enumerate}
\label{thm:impossibility}
\end{theorem}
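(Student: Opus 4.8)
The plan is to prove the two directions separately, following the proof outline sketched in the introduction.

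\medskip

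\noindent\textbf{The ``if'' direction (Part 1).} Suppose $\P$ is $\epsilon$-trivial via a partition $\Class_1, \ldots, \Class_m$. I would define the natural classifier $\C$ that on input $o = \O(\sigma)$ outputs the unique index $i$ such that $f(\sigma) \in \Class_i$; this is well-defined since condition (2) guarantees the sets $\{\O(\bsigma) \mid f(\bsigma) \in \Class_i\}$ have pairwise disjoint support. This $\C$ satisfies $0$-fair treatment: conditioned on $f(\bsigma) = c$ where $c \in \Class_i$, the classifier outputs $i$ with probability $1$ regardless of the group, so the conditional output distributions are identical across groups. For $2\epsilon$-rational fairness I would first argue that $\C$ satisfies $\epsilon$-predictive parity (as promised in the outline, this is the form of the argument that generalizes): conditioned on output $i$, the distribution of $f(\bsigma)$ is exactly the base-rate distribution on $\Class_i$, which by condition (1) is $\epsilon$-close in multiplicative distance between any two groups. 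Then I would invoke the claimed lemma that $\epsilon$-predictive parity implies $O(\epsilon)$-rational fairness (stated in the outline as ``Relating predictive parity and rational fairness'') — more precisely that it implies $2\epsilon$-rational fairness, via the observation that the optimal strategy conditioned on a given output $o$ depends only on the posterior over classes, and that posterior shifts by at most a factor $e^\epsilon$ (hence expected utility by at most $e^\epsilon$) when we ignore the group.

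\medskip

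\noindent\textbf{The ``only if'' direction (Part 2).} Assume $\C$ satisfies $\epsilon$-fair treatment and $\epsilon/5$-rational fairness. The first step is the lemma that $\epsilon/5$-rational fairness combined with $\epsilon'$-fair treatment (for any finite $\epsilon'$) implies $\epsilon$-predictive parity; I would prove this by contraposition, constructing the $\{\Risky, \Safe\}$ game described in the outline, with the utility $u^*$ of $\Safe$ tuned so that conditioned on signal $(o,g)$ the DM strictly prefers $\Risky$ while conditioned on $(o,g')$ it strictly prefers $\Safe$ — exploiting that $y^*$ is more than $e^\epsilon$ times more prevalent in $g$ than in $g'$ given output $o$, and that fair treatment guarantees output $o$ actually occurs with positive probability in both groups — so that every fair strategy (best of which is the uniform mixture over $\{\Risky,\Safe\}$) is more than $(\epsilon/5)$-suboptimal. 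So now $\C$ satisfies both $\epsilon$-fair treatment and $\epsilon$-predictive parity, and it remains to show $\P$ is $4(k-1)\epsilon$-trivial.

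\medskip

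\noindent For this last part I would set up the Chouldechova-style identity relating, for fixed $X$, classes $i,j$, and outcome $o$, the fair-treatment ratio, the predictive-parity ratio, and the base-rate ratio $\pr{f(\bsigma_X)=j}/\pr{f(\bsigma_X)=i}$, valid whenever $o$ is ``ambiguous'' for $\{i,j\}$, i.e. $\pr{\C(\O(\bsigma))=o \wedge f(\bsigma)=i}>0$ and $\pr{\C(\O(\bsigma))=o \wedge f(\bsigma)=j}>0$. Call $i,j$ ambiguous if such an $o$ exists. I would then define the partition $\Class_1,\ldots,\Class_m$ to be the connected components of the graph on classes with edges between ambiguous pairs. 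Condition (2) of triviality follows because if $i \in \Class_a$, $j \in \Class_b$ with $a \neq b$ then no $\O(\bsigma)$ can be compatible with both $\Class_a$ and $\Class_b$ — any shared observable would (via a chain inside the union) force an ambiguous edge crossing components; formally I would argue $\{\O(\bsigma) \mid f(\bsigma) \in \Class_a\}$ and $\{\O(\bsigma) \mid f(\bsigma) \in \Class_b\}$ have disjoint support because each output $o$ can be ``reached'' only from a single component. For condition (1), I fix a component $\Class_i$ and two classes $i_0, i_n \in \Class_i$; by connectedness there is a path $i_0, i_1, \ldots, i_n$ of ambiguous pairs inside $\Class_i$ of length $n \leq k-1$. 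Applying the identity to each consecutive pair, for each edge the base-rate ratio between $X$ and $Y$ changes by at most $e^{4\epsilon}$ (four applications of $\epsilon$-closeness — two for fair treatment, two for predictive parity), so telescoping over the path gives that $\pr{f(\bsigma_X)=i_n}/\pr{f(\bsigma_X)=i_0}$ and $\pr{f(\bsigma_Y)=i_n}/\pr{f(\bsigma_Y)=i_0}$ are $4(k-1)\epsilon$-close; renormalizing within $\Class_i$ then yields condition (1) with parameter $4(k-1)\epsilon$.

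\medskip

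\noindent\textbf{Main obstacles.} The routine-looking but genuinely delicate part is the rational-fairness $\leftrightarrow$ predictive-parity lemma in the ``only if'' direction: one must choose the game parameters (in particular $u^*$ and which class plays the role of $y^*$) so that the gap between every fair strategy and the best unfair strategy is provably larger than $\epsilon/5$, and the constant $5$ and the bound $\epsilon < 3/2$ presumably come out of exactly this optimization (the fact that the best fair strategy is the $1/2$-$1/2$ mixture is what makes the bookkeeping subtle). The second obstacle is making the graph/component argument for condition (2) fully rigorous — in particular verifying that ``no ambiguous edge crosses components'' really does imply disjoint support of the observable-feature distributions, which requires carefully tracking that every outcome $o$ in $\Out^\C_\P$ is compatible with classes from only one component, and handling outcomes of probability zero via the convention $\Pr[\cdot \mid E] = 0$. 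Everything else is bookkeeping with the multiplicative distance (triangle inequality for $\mu$, post-processing, and the fact that each use of fair treatment / predictive parity costs a factor $e^\epsilon$).
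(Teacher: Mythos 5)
Your proposal is correct, and for most of its length it tracks the paper's proof exactly: the index-outputting classifier for the ``if'' direction (with rational fairness obtained via $\epsilon$-predictive parity and the lemma that predictive parity implies $2\epsilon$-rational fairness), the $\{\Risky,\Safe\}$ game for the lemma that $\epsilon/5$-rational fairness plus fair treatment forces $\epsilon$-predictive parity (including the key observations that fair treatment puts both $(o,g)$ and $(o,g')$ in the type support and that the best fair strategy is the uniform mixture), and the Chouldechova-style identity giving $\mu(\alpha_i^X/\alpha_i^Y,\alpha_j^X/\alpha_j^Y)\le 4\epsilon$ for ambiguous pairs. Where you genuinely diverge is the last step of Part (2). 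The paper proceeds by recursion: it proves a dichotomy lemma (either $4(k-1)\epsilon$-equal base rates or ``subgroup perfect prediction''), then repeatedly splits the class space, each time verifying that fair treatment and predictive parity are inherited by the sub-context obtained by conditioning on $f(\sigma)\in\Class_i$, terminating after at most $k-1$ splits. You instead define the partition in one shot as the connected components of the ambiguity graph, get condition (2) from the fact that a shared observable feature would force an ambiguous edge across components (this argument is sound: any outcome in the support of $\C(v)$ for a shared feature value $v$ witnesses ambiguity), and get condition (1) by telescoping along a path of length at most $k-1$ inside each component and then renormalizing. Your route avoids the sub-context inheritance argument entirely, which is a real simplification; its only extra burden is the renormalization, which you do not spell out but which does go through: if all ratios $r_c=\alpha_c^X/\alpha_c^Y$ for $c\in\Class_i$ lie within a factor $e^{4(k-1)\epsilon}$ of one another, then $\sum_{c'\in\Class_i}\alpha_{c'}^X\big/\sum_{c'\in\Class_i}\alpha_{c'}^Y$ is a weighted average of the $r_{c'}$ and hence lies in $[r_{\min},r_{\max}]$, so the conditional base-rate ratio $r_c\cdot S_Y/S_X$ stays within $e^{4(k-1)\epsilon}$ of $1$, matching the theorem's bound. (Singleton components are handled trivially, and positivity of the relevant base rates for ambiguous classes follows from fair treatment, as in the paper.) The one soft spot is your one-clause justification of ``predictive parity implies $2\epsilon$-rational fairness,'' which reads as if only one factor of $e^\epsilon$ is lost; the correct accounting (transporting both the fair strategy's utility and the optimal strategy's utility to a reference group's posterior) costs $e^\epsilon$ twice, which is where the $2\epsilon$ you correctly state comes from.
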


Note that Theorem \ref{thm:informal} from the introduction (i.e., the
classification for binary contexts) follows directly as a special case
when $k = 2$.
%
Additionally, let us remark that Theorem \ref{thm:impossibility} holds even for a
somewhat weaker definition of rational fairness where we only require
the existence of a fair $\epsilon$-equilibrium in games with
\emph{binary decision spaces} (i.e., $\Dec = \Bit$), and even
if we restrict to this simple and natural subclass of games.

\section{Proof of Theorem \ref{thm:impossibility}}
\label{sec:thm1}
Towards proving Theorem \ref{thm:impossibility}, we 
first define a notion of $\epsilon$-predictive parity and show that 
a context $\P$ admits classifiers satisfying 
$\epsilon$-rational fairness and $\epsilon$-fair treatment \emph{if
  and only if} $\P$
admits a classifier satisfying $O(\epsilon)$-predictive parity and
$\epsilon$-fair treatment. (We note that predictive parity is \emph{not}
equivalent to rational fairness, but is so for classifiers that also
satisfy fair treatment.) 

Next, we show that $O(\epsilon)$-triviality
characterizes the set of contexts admitting classifiers satisfying
$\epsilon$-predictive parity and $\epsilon$-rational fairness.
(This second step is interesting in its own right, and can be thought
of a significant strengthening of the impossibility result of
Chouldechova \cite{chouldechova}, which
only showed triviality for the special case when $\epsilon = 0$, the classification
context is binary, and the classifier is binary.\footnote{For this
  special case, her notion of triviality is a special case of our
  notion of $0$-triviality, which requires that either the base rates
  are identical for both groups, or one can perfectly predict the
  class of an individual.})

\subsection{Predictive Parity}
We first introduce an
intermediate notion of approximate ``predictive parity" (we are borrowing the name
from Chouldechova in \cite{chouldechova}, who considered a perfect
version of this notion tailored for binary classifiers, where the
class is a single bit and the classifier also outputs only a single bit.)
Roughly speaking, $\epsilon$-predictive parity requires
that the distributions of individuals' \emph{classes}, conditioned on a
particular \emph{outcome}, be $\epsilon$-close between groups.
We remark that this notion is a strict relaxation of the
notion of $\epsilon$-group calibration considered by Kleinberg et al.
\cite{kleinberg} (which not only requires that the distribution of the
classes be the same between groups conditioned on the outcome $o$ of
the classifier, but
also that the outcome $o$ ``accurately predicts'' the class).

\begin{definition}
We say that a classifier $\C$ satisfies $\epsilon$-\textbf{predictive
  parity} with respect to a context $\P = (\D, f, g, O)$ if, for any
groups $X, Y \in \G_\P$, any outcome $o \in \Out_\P^\C$, and any class $c \in
\Class_{\P}$, we have that
 $$\md(\pr{ f(\bsigma_X) = c \mid  \C(O(\bsigma_X)) = o}, \pr{ f(\bsigma_Y) = c \mid  \C(O(\bsigma_Y)) = o}) \leq \epsilon$$ 
 \end{definition}
 

We next show that predictive parity is closely related to rational
fairness (at least, when combined with fair treatment).
We first show that $\epsilon$-predictive parity implies $O(\epsilon)$-rational fairness.

\begin{claim}
Let $\C$ be a classifier that satisfies $\epsilon$-predictive parity with
respect to a context $\P = (\D, f, g, O)$. Then $\C$ satisfies $2\epsilon$-rational fairness with respect to $\P$.
\label{clm:FP.RF}
\end{claim}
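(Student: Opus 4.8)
The plan is to exhibit, for each finite action space $\Dec$ and each utility $u : \Class_\P \times \Dec \to [0,1]$, one concrete fair strategy and verify it is $2\epsilon$-optimal in $\Gamma^{\P, \C, \Dec, u}$; since $\Dec$ and $u$ are then arbitrary, this gives $2\epsilon$-rational fairness. If $\epsilon = \infty$ the claim is vacuous, so assume $\epsilon < \infty$. The candidate is the "group-blind Bayes-optimal" strategy: fix any reference group $X_0 \in \G_\P$, and for each outcome $o \in \Out_\P^\C$ let $\tilde{s}(o) \in \Dec$ be an action maximizing $\E[u(f(\bsigma_{X_0}), x) \mid \C(O(\bsigma_{X_0})) = o]$ over $x \in \Dec$ (attained since $\Dec$ is finite); define $\tilde s$ arbitrarily elsewhere and set $s(g,o) = \tilde s(o)$, which is fair by construction. (One could equivalently use the outcome-conditioned class posterior taken over all individuals as the reference; it makes no difference.)

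First I would check that $\tilde s$ is well-defined on every type that can actually arise. If $(g,o)$ is in the support of $\{(g(\bsigma), \C(O(\bsigma)))\}$ then $o \in \Out_\P^\C$, hence $o$ lies in the outcome support of some group; and if $\pr{\C(O(\bsigma_Y)) = o} = 0$ for some other group $Y$, then by the convention $\pr{\cdot \mid E} = 0$ when $\pr{E} = 0$ the class posterior for $Y$ is identically zero while that of the first group is nonzero on some class, forcing $\md(\cdot,\cdot) = \infty > \epsilon$ and contradicting $\epsilon$-predictive parity. So every $o \in \Out_\P^\C$ has positive probability under every group, the reference posterior $r_c := \pr{f(\bsigma_{X_0}) = c \mid \C(O(\bsigma_{X_0})) = o}$ is well-defined, and $\epsilon$-predictive parity yields $e^{-\epsilon} r_c \le \pr{f(\bsigma_g) = c \mid \C(O(\bsigma_g)) = o} \le e^{\epsilon} r_c$ for every class $c$ and every group $g$.

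The core estimate is then immediate. Fix a type $(g,o)$ in the support and write $U^g(x) = \E[u(f(\bsigma_g), x) \mid \C(O(\bsigma_g)) = o] = \sum_c \pr{f(\bsigma_g) = c \mid \C(O(\bsigma_g)) = o}\, u(c,x)$ and $U^{\mathrm{ref}}(x) = \sum_c r_c\, u(c,x)$. Because $u(c,x) \ge 0$, multiplying the two-sided posterior bound by $u(c,x)$ and summing over $c$ gives $e^{-\epsilon} U^{\mathrm{ref}}(x) \le U^g(x) \le e^{\epsilon} U^{\mathrm{ref}}(x)$ for all $x$. Since $\tilde s(o)$ maximizes $U^{\mathrm{ref}}$, for any strategy $s'$ (which, being a mixture of pure strategies, has conditional utility at most $\max_x U^g(x)$) we obtain
\[
U^g(s(g,o)) = U^g(\tilde s(o)) \ge e^{-\epsilon} U^{\mathrm{ref}}(\tilde s(o)) \ge e^{-\epsilon} U^{\mathrm{ref}}(x) \ge e^{-2\epsilon} U^g(x) \quad \text{for all } x,
\]
hence $e^{2\epsilon} U^g(s(g,o)) \ge U^g(s'(g,o))$, which is exactly $2\epsilon$-optimality of the fair strategy $s$.

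The only delicate points — and hence what to be careful about — are (i) the support technicalities in the second paragraph (zero-probability outcomes in some group, and the $\pr{\cdot \mid E} = 0$ convention), and (ii) the use of nonnegativity of $u$, which is precisely what transfers multiplicative closeness of the class posteriors to multiplicative closeness of the expected utilities. Both are routine. The factor $2\epsilon$ (rather than $\epsilon$) is intrinsic: a single fair strategy can be Bayes-optimal for at most one reference distribution, so each group pays the predictive-parity gap twice (once to compare its posterior to the reference, once for optimality of the reference-optimal action against its own posterior).
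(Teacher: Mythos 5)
Your proposal is correct and follows essentially the same route as the paper: define a fair strategy by playing, for each outcome, the Bayes-optimal action for a reference group's posterior, and pay the predictive-parity gap twice (posterior-to-reference and reference-to-posterior) to get $e^{2\epsilon}$-optimality. The only cosmetic differences are that the paper uses a per-outcome reference group $g^*_o$ with $(g^*_o,o)$ in the type support (sidestepping your support argument) and phrases the estimate as a contradiction rather than a direct sandwich inequality.
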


\begin{proof}
Consider some classifier $\C$ satisfying $\epsilon$-predictive parity with
respect to a context $\P$. We will show that $\C$ also
satisfies $2\epsilon$-rational fairness with respect to $\P$.

Let $\T^\Gamma_{DM}$ denote the support of $\{ (g(\bsigma)),
\C(\O(\bsigma)) \}$ (i.e., the support of the type space in the
Bayesian game). 
Let $s^*$ be an optimal strategy; we may without loss of generality
assume that $s^*$ is a pure strategy, since for every type $(g,o) \in
\T^\Gamma_{DM}$ there exists a deterministic best response.
We now show how to modify $s^*$ into a fair strategy $s$ without ever
losing too much in expected utility.
For every outcome $o$, pick some $g^*_o$ such that $(g^*_o,o) \in
\T^\Gamma_{DM}$, and define $s(g,o)= s^*(g^*_o,o)$. Clearly $s$ is
fair. We now show that for every pair $(g,o) \in \T^\Gamma_{DM}$, the
expected utility of playing $s^*$ can never be more than a factor
$e^{2\epsilon}$ better than the expected utility of playing $s$, and
thus $s$ is $2\epsilon$-optimal (as desired).

Assume for contradiction that there exists some $(g,o)$ such that $g \neq g_o^*$ and
$$\E [ u(f(\bsigma), s^*( g, o)) \mid g(\bsigma) = g, \C(O(\sigma)) = o]>
e^{2\epsilon} \E [ u(f(\bsigma), s( g, o) \mid g(\bsigma) = g,
\C(O(\bsigma)) = o]$$
That is, 
$$\sum_{y \in \Class_\P} \pr{f(\bsigma_{g}) = y \mid \C(O(\bsigma_{g})) =
    o}u(y, s^*(g,o)) > e^{2\epsilon} \sum_{y \in \Class_\P}
  \pr{f(\bsigma_{g}) = y \mid \C(O(\bsigma_{g})) = o} u(y, s(g,o)) $$
By applying predictive parity (more precisely, that the multiplicative
distance between $\pr{f(\bsigma_{g}) = y \mid \C(O(\bsigma_{g})) =
    o}$ and $\pr{f(\bsigma_{g^*_o}) = y \mid \C(O(\bsigma_{g^*_o})) =
    o}$ is at most $\epsilon$) to both the LHS and the RHS (we lose a
factor $e^\epsilon$ for each application), we get that 
$$\sum_{y \in \Class_\P} \pr{f(\bsigma_{g^*_o}) = y \mid \C(O(\bsigma_{g^*_o})) =
    o} u(y, s^*(g,o)) > \sum_{y \in \Class_\P}
  \pr{f(\bsigma_{g^*_o}) = y \mid \C(O(\bsigma_{g^*_o})) = o} u(y,
  s(g,o))$$
In other words, (and relying on the fact that $s(g,o) = s^*(g^*_o,o)$), 
$$\E [ u(f(\bsigma), s^*( g, o)) \mid g(\bsigma) = g^*_o, \C(O(\sigma)) = o]>
\E [ u(f(\bsigma), s^*( g^*_o, o) \mid g(\bsigma) = g^*_o,
\C(O(\bsigma)) = o]$$
which is a contradiction since, by assumption, $s^*(g_o^*,o)$ is an optimal move given
the type $(g^*_o,o)$.
\end{proof}

As we next show, any classifier that satisfies $\epsilon$-rational
fairness \emph{and} $\epsilon'$-fair treatment (for any $\epsilon'$)
also satisfies $O(\epsilon)$-predictive parity.
%
Intuitively, we show this as follows. Consider some $\C$ that does not
satisfy $O(\epsilon)$-predictive parity, yet satisfies $\epsilon$-rational
fairness and $\epsilon'$-fair treatment. This means there exists some
class $y^*$, groups $g,g'$ and some outcome $o$ such that the
prevalence of $y^*$  is significantly higher in group $g$ than in
group $g'$ conditioned on the classifier outputting $o$. 

We then
construct a very natural game for the DM where every fair strategy has
low utility compared to the optimal unfair strategy. The action
space consists of two actions $\{\Risky, \Safe\}$. If the DM chooses
$\Safe$ they always receive some fixed utility $u^*$. On the other hand, if they
choose $\Risky$, they receive 1 if the individual's class is $y^*$ and 0
otherwise. That is, playing $\Risky$ is good if the individual is
``good'' (i.e., in class $y^*$) and otherwise not. 

Assume there exists some fair strategy $s$ that is $\epsilon$-optimal
in this game.
We first observe that by $\epsilon'$-fair treatment of $\C$, it must
be the case that both $(g,o)$ and $(g,o')$ are in the support of $\{
(g(\bsigma)), \C(\O(\bsigma)) \}$ (i.e., the support of the ``type
distribution'' of the game), and thus optimality of $s$ must
hold conditioned on both of them.

We next use the fact that the prevalence of $y^*$ is significantly
higher conditioned on the DM getting the signal $(o,g)$ than when
getting $(o,g')$, and thus if we set $u^*$ (i.e, the utility of
playing $\Safe$) appropriately, we can
ensure that the DM gains by discriminating between
$g$ and $g'$ (playing $\Risky$ when the group is $g$, and $\Safe$
otherwise). Interestingly, determining by how much a DM can gain by
discriminating turns out to be somewhat subtle; it turns out that the ``best'' fair
strategy (i.e., the fair strategy that minimizes the
expected utility loss with respect to the optimal strategy) mixes with
probability 1/2 between $\Risky$ and $\Safe$.

\begin{claim}
Let $\C$ be a classifier that satisfies $\text{log}\left( \frac{2}{1 + e^{-\epsilon/2}} \right)$-rational fairness with
respect to a context $\P = (\D, f, g, O)$, as well as $\epsilon'$-fair
treatment with respect to $\P$ (for any $\epsilon'$). Then $\C$ satisfies
$\epsilon$-predictive parity with respect to $\P$.
\label{clm:RF.FP}
\end{claim}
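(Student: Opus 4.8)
The plan is to prove the contrapositive: assuming $\C$ does \emph{not} satisfy $\epsilon$-predictive parity (while still satisfying $\epsilon'$-fair treatment for some finite $\epsilon'$), I will exhibit a binary-action game in which every fair strategy is beaten by more than $\log\bigl(\tfrac{2}{1+e^{-\epsilon/2}}\bigr)$ in utility by some unfair strategy, contradicting $\log\bigl(\tfrac{2}{1+e^{-\epsilon/2}}\bigr)$-rational fairness. So suppose there are groups $g, g'$, an outcome $o$, and a class $y^*$ with $\mu\bigl(\Pr[f(\bsigma_g)=y^*\mid \C(O(\bsigma_g))=o],\ \Pr[f(\bsigma_{g'})=y^*\mid \C(O(\bsigma_{g'}))=o]\bigr) > \epsilon$; write $p = \Pr[f(\bsigma_g)=y^*\mid \C(O(\bsigma_g))=o]$ and $q = \Pr[f(\bsigma_{g'})=y^*\mid \C(O(\bsigma_{g'}))=o]$, and WLOG $p > e^\epsilon q$. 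First I need to observe that both $(g,o)$ and $(g',o)$ lie in the support of $\{(g(\bsigma),\C(O(\bsigma)))\}$: since $o\in\Out_\P^\C$, there is some group $h$ with $\Pr[\C(O(\bsigma_h))=o]>0$; but $\epsilon'$-fair treatment forces $\Pr[\C(O(\bsigma_X))=o\mid f(\bsigma_X)=c]$ to be within a bounded multiplicative factor across all groups $X$ for each class $c$ in the (common) support, and combined with surjectivity of $f$ this yields $\Pr[\C(O(\bsigma_X))=o]>0$ for every group $X$. This is the step where fair treatment (rather than predictive parity) is actually used, and it is essential because optimality of a strategy is only required conditioned on types in the support.

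Next I set up the game: $\Dec = \{\Risky,\Safe\}$, and $u(y,\Risky) = 1$ if $y = y^*$ and $0$ otherwise, while $u(y,\Safe) = u^*$ for a threshold $u^*$ to be chosen. Conditioned on type $(h,o)$, playing $\Risky$ has expected utility equal to $\Pr[f(\bsigma_h)=y^*\mid \C(O(\bsigma_h))=o]$, and $\Safe$ has expected utility $u^*$; for a mixed strategy playing $\Risky$ with probability $t$ the expected utility is $t\cdot\Pr[\cdots] + (1-t)u^*$. A fair strategy must use the same $t$ for both $(g,o)$ and $(g',o)$; call it $t^\star$. The unfair strategy that plays $\Risky$ at $(g,o)$ and $\Safe$ at $(g',o)$ gets $p$ at the first type and $u^*$ at the second. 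So at the type where the fair strategy does worst relative to this unfair one, the multiplicative gap is at least $\min\Bigl(\tfrac{p}{t^\star p + (1-t^\star)u^*},\ \tfrac{u^*}{t^\star q + (1-t^\star)u^*}\Bigr)$. The plan is to choose $u^*$ so as to make this worst-case ratio as large as possible regardless of $t^\star$, then show it exceeds $e^{\log(2/(1+e^{-\epsilon/2}))} = \tfrac{2}{1+e^{-\epsilon/2}}$.

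The main obstacle — and the subtle calculation the introduction flags — is the min-max optimization over $(u^*, t^\star)$: I must pick $u^*$ so that even the adversary's best fair response $t^\star$ is still badly beaten, and then verify the resulting bound. The natural choice is to balance the two branches; using $p > e^\epsilon q$ one expects the extremal configuration to be $u^* = \sqrt{pq}$ (geometric mean, consistent with the multiplicative-distance setting) and the worst fair $t^\star = 1/2$ — this is exactly why the "best fair strategy mixes with probability $1/2$". Plugging in, the ratio $\tfrac{p}{\tfrac12 p + \tfrac12\sqrt{pq}} = \tfrac{2\sqrt p}{\sqrt p + \sqrt q} = \tfrac{2}{1 + \sqrt{q/p}}$, and since $q/p < e^{-\epsilon}$ we get $\sqrt{q/p} < e^{-\epsilon/2}$, so the ratio exceeds $\tfrac{2}{1+e^{-\epsilon/2}}$; by symmetry the $\Safe$-branch ratio $\tfrac{u^*}{\tfrac12 q + \tfrac12\sqrt{pq}} = \tfrac{2\sqrt p}{\sqrt p + \sqrt q}$ gives the same bound. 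Thus no fair strategy is $\log\bigl(\tfrac{2}{1+e^{-\epsilon/2}}\bigr)$-optimal, contradicting rational fairness. I would carry this out by first reducing to the two relevant types, then writing the fair player's utility as a function of $t$ and showing that for the chosen $u^*$ the function $t \mapsto \min$ of the two ratios is maximized (from the adversary's perspective, minimized-gap) at $t=1/2$ via a straightforward single-variable argument, and finally substituting the bound $q/p < e^{-\epsilon}$.
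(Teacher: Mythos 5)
Your proposal is correct and follows essentially the same route as the paper's proof: the same support argument via fair treatment, the same $\{\Risky,\Safe\}$ game with $u^*$ set to the geometric mean of the two conditional probabilities (the paper's ``midpoint'' $e^{\pm\delta/2}$ parametrization is exactly your $\sqrt{pq}$), and the same conclusion that the best fair response mixes with probability $1/2$, yielding the multiplicative gap $\tfrac{2}{1+e^{-\epsilon/2}}$. The only cosmetic difference is that you phrase the optimization as a max-min over types rather than the paper's direct min-max, which is harmless here since the two ratios cross at $t=1/2$ (and the minimax inequality covers the direction you need).
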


\begin{proof}
Assume for contradiction that $\C$ satisfies $\text{log}\left( \frac{2}{1 + e^{-\epsilon/2}} \right)$-rational
fairness and $\epsilon'$-fair treatment (with respect to $\P$), yet does not satisfy $\epsilon$-predictive
parity (with respect to $\P$).

Let $\T^\Gamma_{DM}$ denote the support
of $\{ (g(\bsigma)), \C(\O(\bsigma)) \}$ . We first claim that $\T^\Gamma_{DM}
= \Out_\P^\C \times \G_\P$. If not, since $\Out_\P^\C$ is the support
of $\C(O(\bsigma))$ (and thus for every $o \in \Out_\P^\C$ there is at
least one $g \in \G_\P$ for which $(o, g) \in \T^\Gamma_{DM}$), 
there
must exist an outcome $o \in \Out_\P^\C$ and groups $g, g' \in \G_\P$
such that $(o, g) \in \T^\Gamma_{DM}$ but $(o, g') \not\in
\T^\Gamma_{DM}$. This, however, would mean that there is $y \in
\Class_\P$ for which the distributions 
$\lbrace \C(O(\bsigma_g)) \mid
f(\bsigma_g) = y \rbrace$ and $\lbrace \C(O(\bsigma_{g'})) \mid f(\bsigma_{g'}) = y
\rbrace$ have different supports (and hence infinite
max-divergence), as, by definition of $\T^\Gamma_{DM}$, $o$ must be in
the support of the former for some $y$ but cannot be in the support of
the latter for any $y$. This contradicts  $\epsilon'$-fair treatment
(for any $\epsilon'$) of $\C$.

Next, since $\C$ fails to satisfy $\epsilon$-predictive parity, there
exist groups $g, g' \in \G_\P$, class $y^* \in \Class_\P$, and an outcome
$o \in \Out_\P^\C$ such that 
$$\frac{\pr{f(\bsigma_g) = y^* \mid \C(O(\bsigma_g)) =
    o}}{\pr{f(\bsigma_{g'}) = y^* \mid \C(O(\bsigma_{g'})) = o}} > e^{\epsilon}$$
Let $\delta > \epsilon$ be such that 
$$e^{\delta} = \frac{\pr{f(\bsigma_g) = y^* \mid \C(O(\bsigma_g)) =
    o}}{\pr{f(\bsigma_{g'}) = y^* \mid \C(O(\bsigma_{g'})) = o}}$$
and define the ``midpoint'' $p$ between these probabilities as
$$p = e^{\delta/2} \pr{f(\bsigma_{g'}) = y^* \mid \C(O(\bsigma_{g'})) =
  o} = e^{-\delta/2} \pr{f(\bsigma_g) = y^* \mid \C(O(\bsigma_g)) =
    o}$$
Consider a game where $\Dec = \braces{\Risky, \Safe}$, $u(y, \Risky) =
1$ if $y = y^*$ and $0$ otherwise and $u(\cdot, \Safe) = p$. 
The decision-maker's expected utility for choosing $\Safe$ is always $p$; on the other
hand:
\begin{itemize}
\item Conditioned on $(o, g)$, the decision-maker's expected utility
  for choosing $\Risky$ is $$\pr{f(\bsigma_g) = y^* \mid \C(O(\bsigma_g))
    = o} = e^{\delta / 2} p$$
\item Conditioned on $(o, g')$, their expected utility for $\Risky$ is 
$$\pr{f(\bsigma_{g'}) = y^* \mid \C(O(\bsigma_{g'})) = o} = e^{-\delta / 2} p$$
\end{itemize}
Consider some fair \emph{pure} strategy $s$ for DM. It must choose either $\Risky$ or
$\Safe$ for both $(o, g)$ and $(o, g')$. So,
\begin{itemize}
\item If $s$ chooses $\Risky$, DM receives $e^{-\delta / 2} p$ in expected
utility conditioned on $(o, g')$, whereas they could have received $p$
by instead choosing $\Safe$ (thus, they incur a multiplicative loss of $e^{\delta / 2}$).
\item On the other hand, if $s$ chooses $\Safe$, then the
  decision-maker receives utility $p$ conditioned on $(o, g)$, whereas
  they could have received $e^{\delta/2}p$ utility in
  expectation by instead choosing $\Risky$ (again incurring a multiplicative loss of $e^{\delta / 2}$).
\end{itemize}
Thus, we conclude that any fair pure strategy must lose at least a
$e^{\delta / 2}$ multiplicative factor in utility compared to the
optimal (unfair) strategy (which chooses $\Risky$ for $(o, g)$ and $\Safe$ for $(o, g')$).

Consider next a fair \emph{mixed} strategy $s$ that chooses $\Risky$
with probability $p_r$ (and $\Safe$ with probability $1-p_r$) for both
$(o,g)$ and $(o,g')$. 
\begin{itemize}
\item Conditioned on $(o, g')$, the decision-maker gets $p_r e^{-\delta / 2} p + (1-p_r)p$
in expected utility, whereas they could have received $p$ by instead
choosing $\Safe$. This results in a multiplicative loss of $p_r e^{-\delta / 2}+ (1-p_r)$.
\item Conditioned on $(o, g)$, the decision-maker gets $p_r e^{\delta / 2} p + (1-p_r)p$ in expected utility,
whereas they could have received $e^{\delta/2}p$ utility in
  expectation by instead choosing $\Risky$, 
yielding a multiplicative loss of $p_r + (1-p_r) e^{-\delta / 2}$.
\end{itemize}
Thus, when determining the rational strategy that minimizes the loss,
we may without loss of generality assume that $p_r \geq 1/2$ (as the
case when $p_r \leq 1/2$ is symmetric simply by renaming $p_r$ and
$1-p_r$), and focus on finding the $p_r$ that minimizes
$$p_r e^{-\delta / 2}+ (1-p_r) = 1 - p_r (1 - e^{-\delta / 2}) $$
which happens when $p_r$ is as small as possible, and thus 
when $p_r = 1/2$.
So the optimal mixed rational strategy must be $p_r = 1/2$, and thus has a
multiplicative loss of 
$$1/2 (e^{-\delta / 2}+ 1) <  1/2 (e^{-\epsilon / 2}+ 1)$$
for both $(o, g)$ and $(o, g')$ compared to the optimal unfair strategy. (In particular, using the expected utilities above, setting $p_r > 1/2$ worsens the multiplicative loss conditioned on $(o, g')$ by decreasing $p_r e^{-\delta / 2}+ (1-p_r)$, and setting $p_r < 1/2$ worsens the multiplicative loss conditioned on $(o, g)$ by decreasing $p_r + (1-p_r) e^{-\delta / 2}$.)

Hence, the decision-maker gains at least $\frac{2}{1 + e^{-\epsilon/2}}$ utility multiplicatively by switching from any fair mixed strategy to the optimal unfair strategy, and so we conclude the proof with the contradiction that $\C$ cannot satisfy $\text{log}\left( \frac{2}{1 + e^{-\epsilon/2}} \right)$-rational
fairness with respect to $\P$.

\end{proof}

We note that, for $\epsilon < 3/2$ (in particular, $\epsilon$ less than roughly 1.644), we have that $\text{log}\left( \frac{2}{1 + e^{-\epsilon/2}} \right) > \epsilon / 5$, which demonstrates the bounds we show in our other results:

\begin{corollary}
Let $\epsilon \in (0, 3/2)$, and let $\C$ be a classifier that satisfies $\epsilon / 5$-rational fairness with
respect to a context $\P = (\D, f, g, O)$, as well as $\epsilon'$-fair
treatment with respect to $\P$ (for any $\epsilon'$). Then $\C$ satisfies
$\epsilon$-predictive parity with respect to $\P$.
\label{cor:RF.FP}
\end{corollary}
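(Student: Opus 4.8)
The plan is to obtain this as an immediate consequence of Claim~\ref{clm:RF.FP}, bridged by two elementary observations. The first is that rational fairness is monotone in its parameter: if a fair strategy $s$ is $\epsilon_1$-optimal in a game $\Gamma^{\P,\C,\Dec,u}$, it is trivially also $\epsilon_2$-optimal for every $\epsilon_2 \ge \epsilon_1$ (the defining inequality $e^{\epsilon}\E[\cdots] \ge \E[\cdots]$ only gets weaker as $\epsilon$ grows), so $\epsilon_1$-rational fairness implies $\epsilon_2$-rational fairness whenever $\epsilon_1 \le \epsilon_2$. The second is the one-variable inequality
\[
\tfrac{\epsilon}{5} \;<\; \log\!\Bigl(\tfrac{2}{1 + e^{-\epsilon/2}}\Bigr) \qquad \text{for all } \epsilon \in (0, 3/2).
\]
Granting this, the argument is short: by hypothesis $\C$ satisfies $\epsilon/5$-rational fairness; since $\epsilon/5 < \log(2/(1+e^{-\epsilon/2}))$, monotonicity upgrades this to $\log(2/(1+e^{-\epsilon/2}))$-rational fairness; and then Claim~\ref{clm:RF.FP}, invoked with this $\epsilon$ and the same $\epsilon'$, shows that $\C$ satisfies $\epsilon$-predictive parity, which is exactly the conclusion.

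So the only real content is the displayed inequality, and I would prove it with a brief calculus argument. Set $h(\epsilon) = \log(2/(1+e^{-\epsilon/2})) - \epsilon/5$. Then $h(0) = \log 1 - 0 = 0$, and
\[
h'(\epsilon) \;=\; \frac{1}{2\,(e^{\epsilon/2}+1)} \;-\; \frac{1}{5}.
\]
Because $e^{\epsilon/2}$ is strictly increasing, $h'$ is strictly decreasing, vanishing exactly at $\epsilon^\star = 2\ln(3/2) \approx 0.81$, being positive on $[0,\epsilon^\star)$ and negative on $(\epsilon^\star,\infty)$. Hence $h$ is strictly unimodal on $[0,\infty)$: strictly increasing up to $\epsilon^\star$ and strictly decreasing afterwards. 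Since $3/2 > \epsilon^\star$, for $\epsilon \in (0, \epsilon^\star]$ we get $h(\epsilon) > h(0) = 0$, while for $\epsilon \in (\epsilon^\star, 3/2)$ we get $h(\epsilon) > h(3/2)$; and a direct evaluation gives $h(3/2) = \log(2/(1+e^{-3/4})) - 3/10 \approx 0.006 > 0$. Thus $h > 0$ on all of $(0,3/2)$, as claimed. (The same reasoning in fact shows $h > 0$ up to the root $\epsilon \approx 1.644$ of $h$, matching the remark made just before the statement.)

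I do not anticipate any genuine obstacle here. The entire proof is a routine invocation of Claim~\ref{clm:RF.FP} together with monotonicity of the rational-fairness notion in its parameter; the only point requiring any care is the shape of $h$ on $(0, 3/2)$, where the convenient observation is that $h$ is unimodal, so establishing $h > 0$ reduces to checking the two endpoints $h(0) = 0$ and $h(3/2) > 0$ rather than bounding $h$ pointwise throughout the interval.
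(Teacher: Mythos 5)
Your proposal is correct and follows essentially the same route as the paper, which derives the corollary from Claim~\ref{clm:RF.FP} together with the (asserted but unproved) inequality $\log\bigl(2/(1+e^{-\epsilon/2})\bigr) > \epsilon/5$ for $\epsilon < 3/2$ and the implicit monotonicity of rational fairness in its parameter. Your calculus verification of the inequality (and of the root near $1.644$) is accurate and merely fills in a detail the paper leaves to the reader.
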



An interesting observation (which is not relevant for the sequel of
the proof, but nonetheless insightful) which follows from the above
proof is that 
the optimal fair strategy for the DM in the above
game is a \emph{mixed} strategy which uniformly mixes between
$\Safe$ or $\Risky$ (each with probability 1/2), whereas any fair pure
strategy loses a factor of $e^{\epsilon / 2}$ (i.e., significantly more)
in utility. (We note, however, that
the existence of such a gap between the fair mixed and fair pure strategies can only
arise in games where the optimal strategy is unfair: the existence of
an optimal fair mixed strategy implies the existence of an optimal
fair pure strategy.)

\subsection{Proof of Theorem \ref{thm:impossibility} (1)}

By relying on the fact that predictive parity implies rational
fairness, we can now prove the first part of Theorem \ref{thm:impossibility}.

\begin{proposition}[\textit{Theorem \ref{thm:impossibility} (1).}]
If $\P = (\D, f, g, O)$ is an $\epsilon$-trivial context, then there exists a classifier $\C$ that satisfies 0-fair treatment and $2\epsilon$-rational fairness with respect to $\P$.
\label{clm:partOne}
\end{proposition}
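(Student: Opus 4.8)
The plan is to build the obvious ``block classifier'' from the partition witnessing triviality and verify the two properties directly (invoking Claim~\ref{clm:FP.RF} to get rational fairness for free). Concretely, since $\P$ is $\epsilon$-trivial, I would fix a partition $\Class_1,\ldots,\Class_m$ of $\Class_\P$ satisfying conditions~(1) and~(2) of Definition~\ref{def:trivial}, and let $\C$ be the classifier that, on observable features $o$, outputs the unique index $i$ such that $o$ lies in the support of $\braces{O(\bsigma)\mid f(\bsigma)\in\Class_i}$ (and outputs, say, $1$ on any $o$ outside all these supports). The first thing to check is that this is well defined as a function of $o$ alone: condition~(2) says these supports are pairwise disjoint, so at most one such $i$ exists; and since every individual lies in some block, for every $\sigma$ in the support of $\D$ we get $\C(O(\sigma))=i\iff f(\sigma)\in\Class_i$. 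In particular $\Out^\C_\P\subseteq[m]$.

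Next I would verify $0$-fair treatment. Fixing groups $X,Y$, a class $c$, and an outcome $i\in\Out^\C_\P$, the equivalence $\C(O(\sigma))=i\iff f(\sigma)\in\Class_i$ shows that conditioned on $f(\bsigma)=c$ the classifier outputs $i$ with probability $1$ if $c\in\Class_i$ and with probability $0$ otherwise --- in either case independently of the group. So the two conditional probabilities in the fair-treatment definition are equal (both $1$ or both $0$), hence at multiplicative distance $0$.

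Then I would verify $\epsilon$-predictive parity and apply Claim~\ref{clm:FP.RF}. Here I would first note that triviality with \emph{finite} $\epsilon$ forces $\pr{f(\bsigma_X)\in\Class_i}>0$ for every group $X$ and every $i$: if it were $0$, then by surjectivity of $f$ some group $Y$ would have $\pr{f(\bsigma_Y)\in\Class_i}>0$ and hence some $c\in\Class_i$ with a positive conditional base rate, making the multiplicative distance in condition~(1) infinite, a contradiction. Given this, for any $X,Y$, outcome $i$, and class $c$, the equivalence above gives $\pr{f(\bsigma_X)=c\mid\C(O(\bsigma_X))=i}=\pr{f(\bsigma_X)=c\mid f(\bsigma_X)\in\Class_i}$ (and likewise for $Y$); if $c\notin\Class_i$ both sides are $0$, and otherwise condition~(1) bounds their multiplicative distance by $\epsilon$. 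Thus $\C$ satisfies $\epsilon$-predictive parity, and Claim~\ref{clm:FP.RF} yields $2\epsilon$-rational fairness, completing the proof.

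There is no serious technical obstacle in this direction; the two points needing care are (i) arguing that $\C$ is implementable as a function of the observable features alone --- precisely what the ``perfect distinguishability'' condition~(2) provides --- and (ii) the degenerate case where some group assigns zero probability to a block $\Class_i$, which would make the predictive-parity conditionals ill-defined but is already excluded by finiteness of $\epsilon$ in condition~(1).
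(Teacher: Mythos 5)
Your proof is correct and follows essentially the same route as the paper: build the block classifier from the partition witnessing triviality, observe that $\C(O(\sigma))=i$ iff $f(\sigma)\in\Class_i$ to get $0$-fair treatment and $\epsilon$-predictive parity from conditions (2) and (1) respectively, and invoke Claim~\ref{clm:FP.RF} for $2\epsilon$-rational fairness. Your extra care about well-definedness and about blocks with zero mass under some group matches the paper's brief remark that finiteness of $\epsilon$ forces the supports to agree across groups.
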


\begin{proof}
Consider a classifier $\C$ that on input $y$ (in the support of
$\O(\bsigma)$) recovers some $\sigma$ such that $\O(\sigma) = y$,
and then outputs $f(\sigma)$.
\paragraph{Proving that $\C$ satisfies fair treatment:}
Consider some $i \in [m]$ and some class $c \in \Class_i$. We aim to
show that for any two groups $X,Y \in \G_{\P}$ and any outcome $o$,
we have that 
$$\Pr[ \C(\O(\bsigma_X)) = o \mid f(\bsigma_X) = c] = \Pr[\C(\O(\bsigma_Y)) = o \mid f(\bsigma_Y) = c]$$ 
First note that, by the first condition in the definition of an
$\epsilon$-trivial context (i.e., the ``equal base rate condition''),
it follows that $c$ is in the support of $\{f(\bsigma_X)\}$ if and only if it
is in the support of $\{f(\bsigma_Y)\}$.
Next, consider some $c$ in the support of $\{f(\bsigma_X)\}$ (and thus
also in the support of $\{f(\bsigma_Y)\}$). Let $i$ be such that $c
\in \Class_i$.
Due to the second condition in the definition of an
$\epsilon$-trivial context (i.e., that the distributions $\braces{\O(\bsigma) \mid f(\bsigma) \in
  \Class_j}$ for $j \in [m]$ have disjoint support), it follows that
for every $\sigma$ such that
$f(\sigma) = c$, $\C(\O(\sigma))$ always outputs $i$. Thus,
$$\Pr[ \C(\O(\bsigma_X)) = i \mid f(\bsigma_X) = c] =
\Pr[\C(\O(\bsigma_Y)) = i \mid f(\bsigma_Y) = c] = 1$$ 
which concludes the proof that $\C$ satisfies $0$-fair treatment.

\paragraph{Proving that $\C$ satisfies rational fairness:}
To show that $\C$ satisfies $2\epsilon$-rational fairness, we note
that, by Claim \ref{clm:FP.RF}, it suffices to show that $\C$
satisfies $\epsilon$-predictive parity. 
By the first condition of an $\epsilon$-trivial context, we have that for every
$i \in [m]$, $c \in \Class_i$, and $X,Y$ in $\G_{\P}$, 
$$\md(\pr{ f(\bsigma_X) = c \mid
  f(\bsigma_Y) \in \Class_i }, \pr{ f(\bsigma_Y) = c \mid f(\bsigma_Y)
  \in \Class_i }) \leq \epsilon$$
By the disjoint support assumption, we have that $\C(O(\sigma)) = i$
if and only if $f(\sigma) \in \Class_i$, thus we have
$$\md(\pr{ f(\bsigma_X) = c \mid \C(O(\bsigma_X)) = i }, \pr{ f(\bsigma_Y) = c \mid \C(O(\bsigma_Y)) = i }) \leq \epsilon$$
so $\C$ satisfies $\epsilon$-predictive parity. 
\end{proof}


\subsection{Subgroup Perfect Prediction}
To prove the second part of Theorem \ref{thm:impossibility}, we
introduce some additional 
notions.
%
%

\begin{definition}
We say that a classifier $\C$ satisfies \textbf{subgroup perfect
  prediction} with respect to context $\P$ if there exists a
\emph{proper} subset $\psi \subset \Class_\P$ such that
the distributions 
$$\{\C(O(\bsigma)) \mid f(\bsigma) \in \psi\} \quad \text{and} \quad
\{\C(O(\bsigma)) \mid f(\bsigma) \not\in \psi\}$$ 
have disjoint support.
\end{definition}

To characterize classifiers satisfying subgroup perfect prediction, a notion
of ``ambiguity between classes'' will be useful. 
\begin{definition}
Given a classifier $\C$ and context $\P$,
we say that
classes $i,j \in \Class_\P$ are \define{ambiguous} (with respect to $\C$ and
$\P$) if there exists $o \in \Out_\P^\C$ such that $\pr{f(\bsigma) = i
  \wedge \C(O(\bsigma)) = o} > 0$ and $\pr{f(\bsigma) = j \wedge
  \C(O(\bsigma)) = o} > 0$. We further say that classes $i,j \in \Class_\P$ are $n$-\define{ambiguous} if there exists a sequence $(i_0 = i, i_1, i_2, \ldots, i_n = j) \in (\Class_\P)^{n+1}$ such that any two consecutive elements $i_k$ and $i_{k+1}$ are ambiguous.
\end{definition}
We now have the following useful claim which says that, if a classifier
does not satisfy subgroup $\perfpred$, then all classes can be
connected by a ``short'' ambiguous sequence.
\begin{claim}
Consider some classifier $\C$ that does not satisfy subgroup
$\perfpred$ with respect to some context $\P = (\D, f, g, O)$. Then 
for every pair of classes $i,j \in \Class_\P$, we have that $i,j$ are
$m_{i,j}$-ambiguous for some $m_{i,j} \leq |\Class_\P| -1$.
\label{clm:graph}
\end{claim}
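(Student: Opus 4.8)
The plan is to pass to a graph-theoretic reformulation of ambiguity. Define the \emph{ambiguity graph} $G$ on vertex set $\Class_\P$ by placing an edge between $i$ and $j$ exactly when $i$ and $j$ are ambiguous with respect to $\C$ and $\P$. With this definition, two classes $i,j$ are $n$-ambiguous precisely when there is a walk of length $n$ from $i$ to $j$ in $G$. Since $\Class_\P$ is finite with $|\Class_\P|$ vertices, whenever $i$ and $j$ lie in the same connected component of $G$ they are joined by a simple path, which has at most $|\Class_\P|-1$ edges; hence they are $m_{i,j}$-ambiguous for some $m_{i,j}\le |\Class_\P|-1$ (the case $i=j$ being handled by the length-$0$ walk, which is vacuously valid). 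So the whole statement reduces to showing that $G$ is connected.

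To prove connectedness I would argue the contrapositive: if $G$ is disconnected, then $\C$ satisfies subgroup perfect prediction. Pick any connected component of $G$ and let $\psi\subsetneq\Class_\P$ be its (nonempty, proper) vertex set. The point is that $\psi$ has no edge leaving it, and unpacking the definition of an edge this says exactly that there is no outcome $o\in\Out_\P^\C$ together with classes $i\in\psi$ and $j\notin\psi$ satisfying $\pr{f(\bsigma)=i\wedge\C(O(\bsigma))=o}>0$ and $\pr{f(\bsigma)=j\wedge\C(O(\bsigma))=o}>0$ simultaneously. Equivalently, the supports of $\{\C(O(\bsigma))\mid f(\bsigma)\in\psi\}$ and $\{\C(O(\bsigma))\mid f(\bsigma)\notin\psi\}$ are disjoint --- which is precisely subgroup perfect prediction witnessed by $\psi$. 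Here one uses that, since $f$ is surjective and both $\psi$ and $\Class_\P\setminus\psi$ are nonempty, the two conditioning events have positive probability, so these conditional distributions are genuinely defined.

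I expect the only mildly delicate point to be this translation between ``no edge crosses the cut $(\psi,\Class_\P\setminus\psi)$'' and ``the two conditional distributions over outcomes have disjoint support.'' It comes down to the observation that an outcome $o$ lies in the support of $\{\C(O(\bsigma))\mid f(\bsigma)\in\psi\}$ iff $\pr{f(\bsigma)=i\wedge\C(O(\bsigma))=o}>0$ for some $i\in\psi$, and symmetrically for the complement, so a shared outcome is literally the same thing as an ambiguous pair $i\in\psi$, $j\notin\psi$, i.e.\ an edge across the cut. Everything else --- that a simple path in an $N$-vertex graph uses at most $N-1$ edges, and that a connected graph contains a path between any two vertices --- is standard, and the degenerate cases ($|\Class_\P|=1$, which forces there to be no proper subset $\psi$ at all, or $i=j$) are immediate.
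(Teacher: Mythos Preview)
Your proposal is correct and follows essentially the same approach as the paper: build the ambiguity graph, argue by contrapositive that disconnectedness would yield a proper subset $\psi$ witnessing subgroup perfect prediction, and then use the standard bound on simple-path length in a connected graph. If anything, you are slightly more careful than the paper in spelling out the equivalence between ``no edge crosses the cut'' and ``disjoint supports'' and in handling the degenerate cases $i=j$ and $|\Class_\P|=1$.
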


\begin{proof}
Given a context $\P$ and a classifier $\C$, consider a graph $G$ with $n =
\Class_{\P}$ vertices, where we draw an edge between two vertices
$i,j$ if $i$ and $j$ are ambiguous. Note that $i,j$ are $m$-ambiguous
if and only if there exists a path of length $m$ connecting them.

We show that the graph must be fully connected if $\C$ does not
satisfy subgroup $\perfpred$; the proof of the claim immediately follows, as
the shortest path between any two nodes in a fully connected graph with
$n$ nodes can never be more than $n-1$.

Assume for the sake of contradiction that $G$ is not fully connected,
yet $\C$ does not satisfy subgroup $\perfpred$.
Then $G$ must have a component $\psi$ disconnected from the remainder
of the graph (which can be concretely found by, say, considering the
set of all vertices reachable from some class $i \in \Class_\P$). Then
we notice that the set 
of outcomes that can be assigned to individuals with classes in $\psi$ must be entirely disjoint from the set of outcomes that can be assigned to individuals with classes outside $\psi$; otherwise, there would by definition exist an edge between a vertex in $\psi$ and a vertex in $\Class_\P \setminus \psi$ in the graph, contradicting our assumption that $\psi$ is disconnected from $\Class_\P \setminus \psi$.
This contradicts our assumption that $\C$ does not satisfy subgroup perfect prediction.
\end{proof}

The next lemma can be viewed as a weak form of the second part of
Theorem \ref{thm:impossibility}.
(In fact, for the case of binary classification contexts, this
lemma on its own directly implies Theorem 1 from the introduction.)
Looking forward, we will soon strengthen this lemma by repeatedly applying it to prove
the full Theorem \ref{thm:impossibility}. In the sequel, we say that a context $\P$ has 
$\epsilon$-\define{approximately equal base rates} if 
for every $X, Y \in \G_\P$ and every $i \in \Class_\P$, $$\md(\pr{ f(\bsigma_X) = i }, \pr{ f(\bsigma_Y) = i }) \leq \epsilon$$

\begin{lemma}
Let $\P$ be a context, let $\C$ be a classifier that satisfies
$\epsilon$-$\fairtreat$ and $\epsilon$-predictive parity with respect
to a context $\P$, and let $k = |\Class_\P|$. Then either:
\begin{itemize}
\item[(1)] $\P$ satisfies $4 (k-1) \epsilon$-approximately equal base rates, or
\item[(2)] $\C$ satisfies subgroup perfect prediction over $\P$.
\end{itemize}
\label{lemma:impossibility}
\end{lemma}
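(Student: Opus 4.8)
The plan is to use the identity relating fair treatment, predictive parity, and base rates that was derived in the proof outline, but now being careful about when it is well-defined. First I would assume that $\C$ does \emph{not} satisfy subgroup perfect prediction with respect to $\P$, and show that then condition (1) must hold. The starting point is the identity, valid for any group $Z$, any classes $i,j$, and any outcome $o$ for which the relevant conditional probabilities are all positive:
\begin{equation*}
\frac{\pr{f(\bsigma_Z) = j \mid \C(O(\bsigma_Z)) = o}}{\pr{f(\bsigma_Z) = i \mid \C(O(\bsigma_Z)) = o}} \cdot \frac{\pr{\C(O(\bsigma_Z)) = o \mid f(\bsigma_Z) = i}}{\pr{\C(O(\bsigma_Z)) = o \mid f(\bsigma_Z) = j}} = \frac{\pr{f(\bsigma_Z) = j}}{\pr{f(\bsigma_Z) = i}}.
\end{equation*}
The key observation is that the left-hand side is well-defined exactly when $i$ and $j$ are \emph{ambiguous} with respect to $\C$ and $\P$, witnessed by the outcome $o$ (together with the surjectivity of $f$ and the fact that, as argued in Claim \ref{clm:RF.FP}'s proof, fair treatment forces the type support to be a product set, so the conditionals are positive for \emph{all} groups once they are positive for one). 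So for ambiguous $i,j$, writing the identity for two groups $X$ and $Y$ and dividing, the ratios $\pr{f(\bsigma_X)=j}/\pr{f(\bsigma_X)=i}$ and $\pr{f(\bsigma_Y)=j}/\pr{f(\bsigma_Y)=i}$ differ by at most a factor $e^{4\epsilon}$: we apply $\epsilon$-predictive parity twice (once for $j$, once for $i$) and $\epsilon$-fair treatment twice (once for $i$, once for $j$), losing $e^\epsilon$ each time.

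Next I would promote this from adjacent (ambiguous) pairs to arbitrary pairs via a hybrid argument along an ambiguous chain. Since $\C$ does not satisfy subgroup perfect prediction, Claim \ref{clm:graph} gives, for every pair $i,j \in \Class_\P$, a sequence $(i_0 = i, i_1, \dots, i_n = j)$ with $n \le k-1$ and each consecutive pair ambiguous. Multiplying the per-edge bounds telescopically along the chain, and using that multiplicative distance behaves additively under the multiplication of ratios (i.e.\ $\mu(a,b) \le \sum \mu$ of the pieces when $a/b$ factors into the product of the pieces), we get
\begin{equation*}
\mu\!\left(\frac{\pr{f(\bsigma_X)=j}}{\pr{f(\bsigma_X)=i}},\ \frac{\pr{f(\bsigma_Y)=j}}{\pr{f(\bsigma_Y)=i}}\right) \le 4(k-1)\epsilon
\end{equation*}
for \emph{all} pairs $i,j$. (One must double-check the edge case where some $\pr{f(\bsigma_X)=i}=0$: but surjectivity of $f$ plus fair treatment rule this out — every class in $\Class_\P$ appears with positive probability in every group, else some class-conditional distribution of $\C$'s output would have empty, hence mismatched, support, contradicting $\epsilon$-fair treatment for finite $\epsilon$.)

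Finally I would convert "all ratios of base rates are close across groups" into "base rates themselves are close across groups", i.e.\ condition (1). Fix groups $X,Y$ and a class $i$. Summing $\pr{f(\bsigma_X)=j} = \pr{f(\bsigma_X)=i} \cdot \frac{\pr{f(\bsigma_X)=j}}{\pr{f(\bsigma_X)=i}}$ over all $j \in \Class_\P$ gives $1 = \pr{f(\bsigma_X)=i} \cdot \sum_j \frac{\pr{f(\bsigma_X)=j}}{\pr{f(\bsigma_X)=i}}$, and likewise for $Y$; since each summand-ratio is within a factor $e^{4(k-1)\epsilon}$ between $X$ and $Y$, so is the sum, hence so is its reciprocal $\pr{f(\bsigma_X)=i}$ versus $\pr{f(\bsigma_Y)=i}$. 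Thus $\mu(\pr{f(\bsigma_X)=i},\pr{f(\bsigma_Y)=i}) \le 4(k-1)\epsilon$ for every $i$ and every $X,Y$, which is exactly $4(k-1)\epsilon$-approximately equal base rates. I expect the main obstacle to be the bookkeeping around degenerate/zero-probability cases — verifying that ambiguity is precisely the right well-definedness condition for the identity, and that fair treatment (with any finite error) together with surjectivity eliminates all the spurious zeros — rather than any single hard inequality; the hybrid/telescoping step itself is routine once the adjacency bound is in place.
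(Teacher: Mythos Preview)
Your proposal is correct and follows essentially the same approach as the paper: the identity (Proposition~\ref{lemma:cond}), the $4\epsilon$ per-edge bound for ambiguous pairs (Proposition~\ref{lemma:base}), the use of Claim~\ref{clm:graph} to connect any two classes by a chain of length at most $k-1$, and the telescoping along that chain all match. The only minor difference is the final conversion from ``all base-rate ratios are close'' to ``all base rates are close'': the paper argues by contradiction (if some $\alpha_i^X/\alpha_i^Y > e^{4(k-1)\epsilon}$, propagate along chains to get $\alpha_j^X > \alpha_j^Y$ for every $j$, contradicting $\sum_j \alpha_j^X = \sum_j \alpha_j^Y = 1$), whereas you give the direct computation via $\alpha_i^X = 1/\sum_j(\alpha_j^X/\alpha_i^X)$---these are two sides of the same coin.
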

\begin{proof}
Let $\C$ be a classifier that satisfies
$\epsilon$-$\fairtreat$ and $\epsilon$-predictive parity with respect
to $\P$, and let $k = |\Class_\P|$. We will show that either $\P$ satisfies $4 (k-1) \epsilon$-approximately equal base rates, or
$\C$ satisfies subgroup perfect prediction over $\P$. 
Towards proving the lemma, let us introduce some additional notation,
and prove some helpful propositions:
\begin{itemize}
\item Let $\alpha_X^i = \pr{ f(\bsigma_X) = i }$ denote the base rate
  of the class $i$ w.r.t. the group $X$.
\item Let $f_i$ denote the event that $f(\bsigma) = i$ and let 
$\C_o$
  denote the event that $\C(O(\bsigma)) = o$.
\item For any $X \in \G_\P$, let $f_i^X$ denote the event
  $f(\bsigma_X) = i$ and let $\C_o^X$ denote the event that $\C(O(\bsigma_X)) = o$.
\end{itemize}
The following proposition is a generalization of the identity observed
by Chouldechova in \cite{chouldechova}.
\begin{proposition}
Let $i,j \in \Class_\P, o \in \Out_\P^\C$, and $i \neq j$. 
Then, if $\pr{f_i^X \wedge \C_o^X} > 0$ and $\pr{f_j^X
  \wedge \C_o^X } > 0$, we have:
$$\frac{\pr{\C_o^X \mid f_i^X }}{\pr{\C_o^X \mid f_j^X }} = \frac{\pr{f_j^X }}{\pr{f_i^X }} \frac{\pr{f_i^X \mid \C_o^X }}{\pr{f_j^X \mid \C_o^X }}$$
for any $X \in \G_\P$.
\label{lemma:cond}
\end{proposition}

\begin{proof}
First observe that, if $\pr{f_j^X \wedge \C_o^X } > 0$, then it follows by conditional probability that $\pr{f_j^X \mid \C_o^X } > 0$, $\pr{\C_o^X \mid f_j^X } > 0$, and also $\pr{\C_o^X } > 0$. Then the conclusion follows immediately:
$$\frac{\pr{\C_o^X \mid f_i^X}}{\pr{\C_o^X \mid f_j^X }} = \frac{\pr{\C_o^X \wedge f_i^X } / \pr{f_i^X }}{\pr{\C_o^X \wedge f_j^X } / \pr{f_j^X }}$$ $$= \frac{\pr{f_j^X }}{\pr{f_i^X }} \frac{\pr{f_i^X \mid \C_o^X } \pr{\C_o^X }}{\pr{f_j^X \mid \C_o^X } \pr{\C_o^X }} = \frac{\pr{f_j^X }}{\pr{f_i^X }} \frac{\pr{f_i^X \mid \C_o^X }}{\pr{f_j^X \mid \C_o^X }}$$
\end{proof}

We now use the above proposition to get a relationship between the base rate
of any two classes that are ambiguous.
\begin{proposition}
For any two groups $X, Y \in \G_\P$, and any two classes $i,j \in
\Class_{\P}$ that are ambiguous w.r.t. $\C$, we
have:
$$\md \left( \frac{\alpha_i^X}{\alpha_i^Y}, \frac{\alpha_j^X}{\alpha_j^Y} \right) \leq 4 \epsilon$$
\label{lemma:base}
\end{proposition}

\begin{proof}
Consider any two $X, Y \in \G_\P$, and any two classes $i,j \in
\Class_{\P}$ that are ambiguous w.r.t. $\C$. 
By ambiguity, there exists some
$o \in \Out_\P^\C$ such that 
$$\pr{f_i \wedge \C_{o}} > 0 \quad \quad \text{and} \quad \quad 
 \pr{f_j
  \wedge \C_{o}} > 0.$$
There thus must exist groups $g_1, g_2$ such that 
$$\pr{f_i^{g_1} \wedge \C_o^{g_1} } > 0 \quad \quad \text{and} \quad \quad \pr{f_j^{g_2} \wedge
  \C_o^{g_2} } > 0.$$
By fair treatment between the pairs $(g_1, X)$, $(g_2, X)$, $(g_1, Y)$, and $(g_2, Y)$, it follows that
$$\pr{f_i^X \wedge \C_o^X } > 0, \quad \pr{f_j^X \wedge \C_o^X } > 0, \quad
\pr{f_i^Y \wedge \C_o^Y } > 0, \quad \pr{f_j^Y \wedge \C_o^Y } > 0.$$
We can thus apply Proposition \ref{lemma:cond} to conclude: 
$$\md(\pr{\C_o^X \mid f_i^X } \alpha_i^X \pr{f_j^X \mid \C_o^X }, \pr{\C_o^X \mid f_j^X } \alpha_j^X \pr{f_i^X \mid \C_o^X}) = 0$$
By fair treatment, 
$\md(\pr{\C_o^X \mid f_i^X }, \pr{\C_o^Y \mid f_i^Y }) \leq \epsilon$ and $\md(\pr{\C_o^X \mid f_j^X }, \pr{\C_o^Y \mid f_j^Y }) \leq \epsilon$, thus\footnote{using the fact that $\md(ab, c) = x$ and $\md(b, d) = y$ implies $\md(ad, c) \leq x + y$} $$\md(\pr{\C_o^Y \mid f_i^Y } \alpha_i^X \pr{f_j^X \mid \C_o^X }, \pr{\C_o^Y \mid f_j^Y } \alpha_j^X \pr{f_i^X \mid \C_o^X }) \leq 2 \epsilon$$ 
By  predictive parity, 
$\md(\pr{f_i^X \mid \C_o^X }, \pr{f_i^Y \mid \C_o^Y }) \leq \epsilon$ and $\md(\pr{f_j^X \mid \C_o^X }, \pr{f_j^Y \mid \C_o^Y }) \leq \epsilon$, thus $$\md(\pr{\C_o^Y \mid f_i^Y } \alpha_i^X \pr{f_j^Y \mid \C_o^Y }, \pr{\C_o^Y \mid f_j^Y } \alpha_j^X \pr{f_i^Y \mid \C_o^Y }) \leq 4 \epsilon$$ 
But, by Proposition \ref{lemma:cond} applied to $Y$ (since $\pr{f_i^Y \wedge \C_o^Y } >
0$ and $\pr{f_j^Y \wedge \C_o^Y } > 0$), it also follows that: 
$$\md(\pr{\C_o^Y \mid f_i^Y } \alpha_i^Y \pr{f_j^Y \mid \C_o^Y }, \pr{\C_o^Y \mid f_j^Y } \alpha_j^Y \pr{f_i^Y \mid \C_o^Y }) = 0$$ 
So, dividing the last two expressions\footnote{using the fact that $\md(a/b, c/d) \leq
  \md(a, c) + \md(b, d)$} (which is possible 
since $\pr{f_i^X \wedge \C_o^X } > 0$ and $\pr{f_j^X
  \wedge \C_o^X } > 0$)
we conclude
$$\md \left( \frac{\alpha_i^X}{\alpha_i^Y}, \frac{\alpha_j^X}{\alpha_j^Y} \right) \leq 4 \epsilon$$
\end{proof}

Armed with the above proposition, we turn to proving the lemma.
Assume for
contradiction that $\P$ does not satisfy $4 (k-1)
\epsilon$-approximately equal base rates, and that
$\C$ does not satisfy subgroup perfect prediction over $\P$. Let $n = k-1$; by Claim
\ref{clm:graph}, we have that, for every pair of classes $i,j \in \Class_\P$, $i$ and $j$ are
$m_{i,j}$-ambiguous for some $m_{i,j} \leq n$.
By our assumption that $\P$ does not satisfy $4n \epsilon$-$\ebr$,
there exists some $i \in \Class_\P$ and some $X, Y \in \G_\P$ such
that
$$\mu(\alpha_i^X,\alpha_i^Y) > e^{4n \epsilon}.$$
Thus at least one of $\alpha_i^X$ and $\alpha_i^Y$ needs to be non-zero, and then by the
definition of fair treatment, we have that also the second one must be non-zero.
Thus, 
either 
$$\frac{\alpha_i^X}{\alpha_i^Y} > e^{4n \epsilon} \quad \quad \text{or} \quad \quad 
 \frac{\alpha_i^X}{\alpha_i^Y} < e^{-4n\epsilon}.$$
We may assume without loss of generality that the former condition holds
(as we may otherwise switch $X$ and $Y$).

By our ambiguity assumptions, for any $j \in \Class_\P$, there is some
$m = m_{i,j} \leq n$ and an ambiguous chain $(i_0 = i, i_1, i_2,
\ldots, i_m = j) \in (\Class_\P)^{m+1}$ so that any two consecutive
elements $i_s$ and $i_{s+1}$ are ambiguous; in particular this means
that Proposition \ref{lemma:base} applies to any consecutive elements
in the sequence, and thus for every $s \in [m-1]$
$$\md \left( \frac{\alpha_{i_s}^X}{\alpha_{i_s}^Y},
  \frac{\alpha_{i_{s+1}}^X}{\alpha_{i_{s+1}}^Y} \right) \leq 4
\epsilon$$ 
Hence, iteratively employing Proposition \ref{lemma:base}, we have 
$$\frac{\alpha_{i_1}^X}{\alpha_{i_1}^Y} > e^{4(n-1) \epsilon}, \quad
 \frac{\alpha_{i_2}^X}{\alpha_{i_2}^Y} > e^{4(n-2) \epsilon}, \quad \ldots,
 \quad \frac{\alpha_{i_m}^X}{\alpha_{i_m}^Y} = \frac{\alpha_{j}^X}{\alpha_{j}^Y} > e^{4(n-m) \epsilon} \geq 1$$
Thus, for every $j \in \Class_{\P}$, we have that 
$$\alpha_j^X> \alpha_j^Y$$
which is a contradiction since
$$\sum_{i \in \Class_\P} \alpha_i^X = \sum_{i \in \Class_\P} \alpha_i^Y
= 1.$$
\end{proof}

\subsection{Proof of Theorem \ref{thm:impossibility} (2)}
\label{sec:proof1}
In this section, we finally prove the second step of Theorem
\ref{thm:impossibility}.
\begin{proposition}[\textit{Theorem \ref{thm:impossibility} (2).}]
Let $\epsilon <3/2$, let $\P$ be a classification context, and let $\C$
be a classifier satisfying $\epsilon$-fair
treatment and $\epsilon / 5$-rational fairness with respect to
$\P$. Then $\P$ is $4(k-1)\epsilon$-trivial, where $k = |\Class_\P|$.
\label{prop:partTwo}
\end{proposition}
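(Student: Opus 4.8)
The plan is to reduce the statement to the distributional core of Lemma~\ref{lemma:impossibility} and then apply that lemma recursively. Since $\epsilon < 3/2$, Corollary~\ref{cor:RF.FP} applies to $\C$: $\epsilon/5$-rational fairness together with $\epsilon$-fair treatment yields that $\C$ also satisfies $\epsilon$-predictive parity with respect to $\P$. Hence it suffices to prove, by strong induction on $k = |\Class_\P|$, the claim that any classifier satisfying $\epsilon$-fair treatment and $\epsilon$-predictive parity with respect to a context $\P$ forces $\P$ to be $4(k-1)\epsilon$-trivial. When $k = 1$ the unique class has base rate $1$ in every group, so the partition $\{\Class_\P\}$ witnesses $0$-triviality (condition~(2) of Definition~\ref{def:trivial} is vacuous). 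For $k \ge 2$, apply Lemma~\ref{lemma:impossibility}. If $\P$ has $4(k-1)\epsilon$-approximately equal base rates, the partition $\{\Class_\P\}$ again works, since conditioning on $\Class_\P$ is no conditioning at all and condition~(2) is vacuous. Otherwise $\C$ satisfies subgroup perfect prediction: there is a proper nonempty $\psi \subsetneq \Class_\P$ (the $\psi$ furnished by Lemma~\ref{lemma:impossibility}, being a connected component of the ambiguity graph of Claim~\ref{clm:graph}, is nonempty) with $\{\C(O(\bsigma)) \mid f(\bsigma) \in \psi\}$ and $\{\C(O(\bsigma)) \mid f(\bsigma) \notin \psi\}$ of disjoint support.

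Given such a $\psi$, I would split $\P$ into the sub-context $\P_1$ obtained by conditioning the individual distribution $\D$ on $f(\sigma) \in \psi$, and $\P_2$ obtained by conditioning on $f(\sigma) \notin \psi$; these have $|\psi|$ and $k - |\psi|$ classes respectively, with $1 \le |\psi| \le k-1$, so both are strictly smaller than $k$ and the induction hypothesis applies. The crux is checking that $\C$ still satisfies $\epsilon$-fair treatment and $\epsilon$-predictive parity with respect to $\P_1$ (and symmetrically $\P_2$). Fair treatment transfers trivially: for a class $c \in \psi$ and group $X$, the event ``$f = c$'' entails ``$f \in \psi$'', so $\{\C(O(\bsigma_X^{\P_1})) \mid f(\bsigma_X^{\P_1}) = c\}$ is literally the same distribution as $\{\C(O(\bsigma_X)) \mid f(\bsigma_X) = c\}$, and $\Out_{\P_1}^\C \subseteq \Out_\P^\C$, so the fair-treatment inequality is inherited verbatim. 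Predictive parity is where subgroup perfect prediction is essential: if $o \in \Out_{\P_1}^\C$ then $o$ is output by $\C$ on some individual with class in $\psi$, hence by disjointness of supports $o$ is not output by $\C$ on any individual with class outside $\psi$; therefore the event ``$\C(O(\bsigma_X)) = o$'' already entails ``$f(\bsigma_X) \in \psi$'', so $\{f(\bsigma_X^{\P_1}) \mid \C(O(\bsigma_X^{\P_1})) = o\}$ equals $\{f(\bsigma_X) \mid \C(O(\bsigma_X)) = o\}$, and predictive parity for $\P_1$ follows from predictive parity for $\P$.

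By the induction hypothesis, $\P_1$ is $4(|\psi|-1)\epsilon$-trivial via some partition of $\psi$, and $\P_2$ is $4(k-|\psi|-1)\epsilon$-trivial via some partition of $\Class_\P \setminus \psi$; both parameters are at most $4(k-2)\epsilon \le 4(k-1)\epsilon$. I claim the union of these two partitions witnesses $4(k-1)\epsilon$-triviality of $\P$. For condition~(1): each part $\Class_i$ lies entirely inside $\psi$ or entirely inside $\Class_\P \setminus \psi$, so conditioning on $f \in \Class_i$ within $\P$ coincides with conditioning on $f \in \Class_i$ within the relevant sub-context, whence the base-rate closeness (with parameter $\le 4(k-1)\epsilon$) is inherited. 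For condition~(2): two parts on the same side of the split have disjoint observable-feature supports by the sub-context's condition~(2) (again because the conditioning events agree between $\P$ and the sub-context); two parts on opposite sides have disjoint observable-feature supports because subgroup perfect prediction at the level of $\C$'s outputs already forces disjointness at the level of observable features --- as $\C$'s coins are independent of $\sigma$, $O(\sigma) = O(\sigma')$ implies $\C(O(\sigma))$ and $\C(O(\sigma'))$ are identically distributed, so a shared observable feature between a $\psi$-individual and a non-$\psi$-individual would produce a shared classifier output, contradicting subgroup perfect prediction. This closes the induction and, combined with Corollary~\ref{cor:RF.FP}, proves the proposition.

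I expect the main obstacle to be the bookkeeping of the inductive step rather than any single hard inequality: one must (i) verify that predictive parity genuinely survives restriction to $\P_1$ and $\P_2$ --- the one place where the ``subgroup perfect prediction'' alternative of Lemma~\ref{lemma:impossibility} is exploited --- and (ii) verify condition~(2) of triviality for the recombined partition, which requires the (easy but worth stating) passage from distinguishability of $\C$'s outputs to distinguishability of the underlying observable features. Everything else is a direct invocation of Corollary~\ref{cor:RF.FP} and Lemma~\ref{lemma:impossibility}, together with routine manipulations of conditional probabilities under restriction.
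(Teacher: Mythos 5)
Your proposal is correct and follows essentially the same route as the paper: reduce to $\epsilon$-predictive parity via Corollary~\ref{cor:RF.FP}, apply Lemma~\ref{lemma:impossibility}, and whenever the subgroup-perfect-prediction alternative occurs, transfer both fairness notions to the conditioned sub-contexts using the disjointness of the classifier's output supports (exactly the content of the paper's Proposition~\ref{lemma:induction}). The only difference is organizational --- you run a strong induction with a binary split, whereas the paper iteratively refines a single partition at most $k-1$ times --- and your explicit justifications of the nonemptiness of $\psi$ and of the passage from disjoint classifier outputs to disjoint observable features are points the paper leaves implicit.
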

\begin{proof}
Consider some classification context $\P$; let
$\epsilon < 3/2$ be a constant and let $k = |\Class_\P|$. Assume
the existence of a classifier $\C$ satisfying $\epsilon$-fair
treatment and $\epsilon / 5$-rational fairness with respect to
$\P$. We aim to show that $\P$ is $4(k-1)\epsilon$-trivial.

First, note that by Corollary \ref{cor:RF.FP}, we have that $\C$ also
satisfies $\epsilon$-predictive parity. To show that $\P$ is $4 (k-1)
\epsilon$-trivial, we shall repeatedly apply the following proposition.

\begin{proposition}
Let $\P = (\D, f, g, O)$ be a context (where $|\Class_\P| = k$) for which there exists a classifier $\C$ satisfying $\epsilon$-$\fairtreat$ and $\epsilon$-predictive parity with respect to $\P$.

Let $\Class_1, \ldots, \Class_m$ be a partitioning of $\Class_\P$ into
subsets such that, for any $i,j \in [m]$ with $i \neq
j$, the distributions 
$$\braces{\O(\bsigma) \mid f(\bsigma) \in \Class_i} \quad \text{and}
\quad \braces{\O(\bsigma) \mid f(\bsigma) \in \Class_j}$$ 
have disjoint support.
Then, either of the following conditions must hold.
\begin{itemize}
\item For any two groups
  $X,Y$ in $\G_{\P}$, 
and any $i \in [m]$,
$$\md(\pr{ f(\bsigma_X) = c \mid
    f(\bsigma_X) \in \Class_i }, \pr{ f(\bsigma_Y) = c \mid f(\bsigma_Y)
    \in \Class_i }) \leq 4 (k-1) \epsilon$$
(i.e., $X$ and $Y$ have approximately equal base rates conditioned on each
subset of classes $\Class_i$).
\item There exists some $i$ and some partition of $\Class_i$ into 
  \emph{proper} subsets $\Class_i^0$, $\Class_i^1$ such that the
  distributions $\braces{\O(\bsigma) \mid f(\bsigma) \in \Class_i^0}$ and
  $\braces{\O(\bsigma) \mid f(\bsigma) \in \Class_i^1}$ 
have disjoint support.
\end{itemize}
\label{lemma:induction}
\end{proposition}

\begin{proof}
Consider $\P,\C, \Class_1, \ldots, \Class_m$ satisfying the premise of
the proposition. 
Let $\D_i$ be the distribution over $\sigma$ obtained by conditioning
$\D$ on the event that $f(\sigma) \in \Class_i$.
We claim that $\C$ also satisfies $\epsilon$-$\fairtreat$ and
$\epsilon$-predictive parity with respect to each context $\P_i =
(\D_i, f, g, O)$, as the conditional distributions over which they are defined are unchanged if we restrict to $f(\bsigma) \in \Class_i$.
For fair treatment, this is obvious as for any $c \in \Class_i$,
conditioning on $f(\bsigma) = c$ is equivalent to conditioning on
$f(\bsigma) = c \wedge f(\bsigma) \in \Class_i$ (as these events are the same).

For predictive parity, notice that because the distributions
$\braces{\O(\bsigma) \mid f(\bsigma) \in \Class_i}$ are mutually
disjoint, there is also a partition $\Omega_1, \ldots, \Omega_m$ of
the outcome space $\Out_\P^\C$ such that $\C(O(\sigma)) \in \Omega_i$ if and only if $f(\sigma) \in \Class_i$.
Thus conditioning on $f(\bsigma) \in \Class_i$ is equivalent to
conditioning on $\C(\O(\bsigma)) \in \Omega_i$. 
We conclude that
conditioning on $\C(\O(\bsigma)) = o \wedge f(\bsigma) \in \Class_i$
(whenever this event happens with positive probability) is equivalent
to conditioning on $\C(\O(\bsigma)) = o \wedge \C(\O(\bsigma)) \in
\Omega_i$
which in turn is equivalent to conditioning on just $\C(\O(\bsigma)) = o$.

Hence, for $\C$ and each context $\P_i$, we can apply Lemma
\ref{lemma:impossibility}, showing that either $\P_i$ has $4 (k-1)
\epsilon$-approximately equal base rates,
or $\C$ satisfies subgroup perfect prediction with
respect to $\P_i$. 
In case all $\P_i$ satisfy  $4 (k-1)
\epsilon$-approximately equal base rates, we are done (we are
satisfying condition 1 in the proposition).
Otherwise, there must exist some $i$ such that $\C$ satisfies subgroup perfect prediction with
respect to $\P_i$; that is, there some proper subset $\psi$ of
$\Class_i$ such that the distributions $\braces{\C(\O(\bsigma)) \mid
  f(\bsigma) \in \psi}$ and 
$\braces{\C(\O(\bsigma)) \mid f(\bsigma) \not\in \psi}$ have disjoint
support (when $\bsigma$ is defined over $\D_i$), which in turn means that 
the distributions $\braces{\O(\bsigma) 
  f(\bsigma) \in \psi}$ and 
$\braces{\O(\bsigma) \mid f(\bsigma) \not\in \psi}$ also have disjoint support.
Hence we may partition $\Class_i$ into $\Class_i^0 = \Class_i
\setminus \psi$ and $\Class_i^1 = \psi$ to satisfy the second
condition of the proposition (relying on the fact that
$\braces{\O(\bsigma) \mid f(\bsigma) \in \Class_i}$ has disjoint support
from the support of $\braces{\O(\bsigma) \mid
  f(\bsigma) \in \Class_j}$ for every $j \neq i$, and thus so will
$\braces{\O(\bsigma) \mid f(\bsigma) \in \Class^b_i}$ for $b \in \{0,1\}$).
\end{proof}

Now, noticing that we may partition $\Class_\P$ into at most
$|\Class_\P| = k$ distinct subsets,
we can apply Proposition
\ref{lemma:induction} repeatedly at most $k-1$ times (starting with
$\Class_1 = \Class_\P$, every time increasing the number of
partitions by one (by replacing $\Class_i$ with $\Class_i^0$ and
$\Class_i^1$). Thus, when we can no longer further partition some
subset $\Class_i$, the first condition from the proposition must hold,
and thus we have ``$4 (k-1) \epsilon$-approximately equal base rates conditioned on $\Class_i$'' for
every $i$. We conclude that $\P$ is $4 (k-1) \epsilon$-trivial, which completes the proof of Theorem \ref{thm:impossibility} (2).

\end{proof}

\subsection{Concluding the Proof of Theorem \ref{thm:impossibility}}
Theorem \ref{thm:impossibility} follows as a direct consequence of
Proposition \ref{clm:partOne} and Proposition \ref{prop:partTwo}. This concludes the proof of the
main theorem.

\label{sec:proof1}

\section*{Acknowledgments}
We are extremely grateful to Jon Kleinberg, Silvio Micali and Ilan Komargodski for
delightful conversations and insightful suggestions.

\bibliographystyle{plain}
\bibliography{balance}

\appendix

\section{Fair Treatment and Post-Processing}
\label{threshold.ssec}
We here remark that the notion of fair treatment is closed under
``post-processing''. If a classifier $\C$ satisfies
$\epsilon$-fair treatment with respect to a context $\P = (\D, f, g,
O)$, then for any (possibly probabilistic) function 
$\M$, $\C'(\cdot) = \M(\C(\cdot))$ will
also satisfy $\epsilon$-fair treatment with respect to $\P$. 
\begin{theorem}
Let $\C_1$ be a classifier satisfying $\epsilon$-$\fairtreat$ with
respect to context $\P = (\D, f, g, O)$. Let $\C_2$ be any classifier
over the context $\P' = (\D, f, g, \C_1(O(\cdot)))$
Then $\C_2$ satisfies $\epsilon$-$\fairtreat$ with respect to $\P'$.
\label{clm:bracket}
\end{theorem}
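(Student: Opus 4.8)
The plan is to prove this by a direct calculation showing that, for any class $c$ and outcome $o' \in \Out_{\P'}^{\C_2}$, the conditional probability $\Pr[\C_2(\C_1(O(\bsigma_X))) = o' \mid f(\bsigma_X) = c]$ can be written as a fixed (group-independent) linear combination of the quantities $\Pr[\C_1(O(\bsigma_X)) = o \mid f(\bsigma_X) = c]$, over all $o$ in the support of $\C_1$. More precisely, first I would fix groups $X, Y \in \G_\P$ and a class $c \in \Class_\P$, and condition everything on $f(\bsigma_X) = c$ (resp. $f(\bsigma_Y) = c$). By the law of total probability,
\[
\Pr[\C_2(\C_1(O(\bsigma_X))) = o' \mid f(\bsigma_X) = c] = \sum_{o} \Pr[\C_1(O(\bsigma_X)) = o \mid f(\bsigma_X) = c] \cdot \Pr[\C_2(o) = o'],
\]
where the sum ranges over the support of $\C_1(O(\bsigma))$, and the key point is that $\Pr[\C_2(o) = o']$ is a quantity intrinsic to the (randomized) algorithm $\C_2$ and does not depend on $X$ or on $\D$ conditioned on the group — it is just the probability that $\C_2$, run on the fixed string $o$, outputs $o'$.

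Next I would invoke $\epsilon$-fair treatment of $\C_1$: for each $o$ we have $\mu\bigl(\Pr[\C_1(O(\bsigma_X)) = o \mid f(\bsigma_X) = c],\, \Pr[\C_1(O(\bsigma_Y)) = o \mid f(\bsigma_Y) = c]\bigr) \le \epsilon$, i.e. the two probabilities are within a multiplicative factor $e^{\epsilon}$ of each other. Since both of the quantities above are nonnegative linear combinations, with the \emph{same} coefficients $\Pr[\C_2(o) = o']$, of terms that are pairwise within a factor $e^\epsilon$, the two sums are themselves within a factor $e^\epsilon$: if $a_o \le e^\epsilon b_o$ for all $o$ and $\lambda_o \ge 0$, then $\sum_o \lambda_o a_o \le e^\epsilon \sum_o \lambda_o b_o$, and symmetrically. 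Hence $\mu\bigl(\Pr[\C_2(\C_1(O(\bsigma_X))) = o' \mid f(\bsigma_X) = c],\, \Pr[\C_2(\C_1(O(\bsigma_Y))) = o' \mid f(\bsigma_Y) = c]\bigr) \le \epsilon$, which is exactly $\epsilon$-fair treatment for $\C_2$ with respect to $\P'$.

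The only real subtlety — and the step I would be most careful about — is the handling of zero probabilities and the degenerate cases in the definition of $\mu$ (where $\mu(x,y) = \infty$ unless both are zero). I would note that fair treatment of $\C_1$ already forces $\Pr[f(\bsigma_X) = c] > 0 \iff \Pr[f(\bsigma_Y) = c] > 0$ (a supported class for one group must be supported for the other, else the conditional distributions would have differing support and hence infinite max-divergence — this is the same observation used in the proof of Claim~\ref{clm:RF.FP}); when neither is positive both conditional probabilities are $0$ by the abuse-of-notation convention and $\mu = 0$ trivially. When they are positive, one checks that a given output $o'$ is attainable for group $X$ (i.e. $o' \in \Out_{\P'}^{\C_2}$ via that branch) iff it is attainable for group $Y$, which again follows because the coefficient-weighted sums above are simultaneously zero or simultaneously positive. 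With these cases dispatched, the linear-combination inequality above closes the argument. (Everything extends verbatim if $\C_2$ is allowed to additionally read shared randomness, as long as that randomness is independent of $\sigma$; the coefficients $\Pr[\C_2(o)=o']$ simply average over it.)
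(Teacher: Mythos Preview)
Your proposal is correct and follows essentially the same approach as the paper: decompose $\Pr[\C_2(\C_1(O(\bsigma_X)))=o'\mid f(\bsigma_X)=c]$ via the law of total probability into a group-independent linear combination of the $\C_1$-conditional probabilities, then invoke the fact that multiplicative $\epsilon$-closeness is preserved under nonnegative linear combinations with identical coefficients. Your treatment of the zero-probability edge cases is in fact more careful than the paper's, which simply asserts the linear-combination inequality without addressing support issues.
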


\begin{proof}
Let $\C_1$ be a classifier satisfying $\epsilon$-$\fairtreat$ w.r.t. some
context $\P$. Consider some groups $X,
Y \in \G_\P$, some class $c \in
\Class_{\P}$, and some outcome $o \in \Out_\P^{\C'}$; we need to show
that 
  $$\mu(\Pr [\C_2(\C_1(\O(\bsigma_X))) = o \mid f(\bsigma_X) = c], \Pr
  [\C_2(\C_1\O(\bsigma_Y))) = o \mid f(\bsigma_Y) = c]) \leq \epsilon$$
Towards doing this, note that 
\begin{eqnarray*}
\Pr [\C_2(\C_1(\O(\bsigma_X))) = o \mid f(\bsigma_X) = c] \\
= \sum_{o_1
  \in \Out_\P^{\C_1}} \Pr [\C_2(o_1) = o \mid f(\bsigma_X) = c, C_1(o) =
  o_1] \Pr [\C_1(\O(\bsigma_X)) = o \mid f(\bsigma_X) = c]
\\
= \sum_{o_1 \in \Out_\P^{\C_1}} \Pr [\C_2(o_1) = o \mid C_1(o) =
  o_1] \Pr [\C_1(\O(\bsigma_X)) = o \mid f(\bsigma_X) = c]
\end{eqnarray*}
By the same argument applied to $Y$, we also have that:

\begin{eqnarray*}
\Pr [\C_2(\C_1(\O(\bsigma_Y))) = o \mid f(\bsigma_Y) = c]\\
= \sum_{o_1
  \in \Out_\P^{\C_1}} \Pr [\C_2(o_1) = o \mid C_1(o) =
  o_1] \Pr [\C_1(\O(\bsigma_Y)) = o \mid f(\bsigma_Y) = c]
\end{eqnarray*}

These two probabilities are $\epsilon$-close since, by fair treatment, $\Pr [\C_1(\O(\bsigma_X)) = o \mid f(\bsigma_X) = c]$
and $\Pr [\C_1(\O(\bsigma_Y)) = o \mid f(\bsigma_Y) = c]$ are
$\epsilon$-close, and furthermore by the fact that multiplicative distance is
preserved under linear operations\footnote{That is, if $\mu(a,b) \leq
  \epsilon$ and $\mu(a',b') \leq \epsilon$ then $\mu(\alpha a + \beta
  a', \alpha b + \beta
  b') \leq \epsilon$.}. This proves the theorem.
\end{proof}

\paragraph{Expectation-based notions are not closed under
  post-processing.} Let us also remark that earlier
``expectation-based'' definitions of
$\fairtreat$ are not preserved under post-processing. 

Assume we have some non-binary finite outcome space of scores $\Omega
\subset [0,1]$, similar to the notion of ``risk scores" proposed in
\cite{kleinberg}, and that we define $\fairtreat$ as requiring that,
conditioned on a particular class, the \emph{expected} outcome for an
individual must be within $e^\epsilon$ multiplicatively between the
two groups (as the notion considered in \cite{kleinberg}). Now consider the following example of a threshold classifier which does not preserve this definition of $\fairtreat$:

\begin{itemize}
\item Assume $\advO = \braces{0, 0.25, 0.5, 0.75, 1}$, and that, for some class $c$, individuals in groups $X$ and $Y$ with class $c$ are classified by some classifier $\C$ according to the following distributions:
\begin{center}
\medskip
\begin{tabular}{|l|c|c|c|c|c|}
\hline
 & 0 & 0.25 & 0.5 & 0.75 & 1\\
\hline
Group $X$ & 50\% & 0\% & 0\% & 50\% & 0\%\\
\hline
Group $Y$ & 50\% & 0\% & 25\% & 0\% & 25\%\\
\hline
\end{tabular}
\end{center}

\item Clearly, the expected score $\C(\sigma)$ of individuals in class $c$ is the same for groups $X$ and $Y$ (0.375 for both); if we assume that all individuals in other classes are assigned a random classification by $\C$, then we observe that $\C$ satisfies errorless $\fairtreat$ by the variant definition above.

\item However, if we create a threshold classifier $\C'$ that outputs 0 if $\C(\sigma) < 0.75$ and 1 if $\C(\sigma) \geq 0.75$, then we notice that, of individuals in class $c$, 50\% in group $X$ receive a score of 1 under $\C'$, while only 25\% in group $Y$ receive a score of 1. Hence, $\C'$ does not preserve the same $\fairtreat$ error as $\C$ (as the expected scores of groups $X$ and $Y$ conditioned on $c$ are 0.5 and 0.25, respectively).
\end{itemize}

Intuitively, the reason for this failure in closure under
post-processing is that an expectation-based notion of $\fairtreat$
does not necessarily account for base differences in the
\emph{distributions} of outcomes (conditioned on a particular
class)---indeed, two distributions may have the same expectation but
be quite different. We lastly note that this issue is not present for
binary classifiers (i.e., classifiers that output just a single bit),
since for such classifiers the expectation-based definition is
equivalent to our definition.


\section{On the Use of Multiplicative Distance}
\label{sec:varcomp}

We here motivate our use of a multiplicative notion of distance (i.e.,
max-divergence)  as opposed to an additive notion of distance (such as
statistical distance, i.e., the total variation norm).
Consider a classifier used to determine whether to search people for
weapons. Assume such a classifier determined to search 1\% of
minorities at random, but \emph{only} the minorities (and no
others). Such a classifier would still have a $\fairtreat$ error of 0.01 if we used the total variation norm, while the max-divergence would in fact be infinite (and indeed, such a classification would be blatantly discriminatory).

Our use of max-divergence between distributions for our definitions is
reflective of the fact that, in cases where we have such small
probabilities, discrimination should be measured multiplicatively,
rather than additively. In addition, when we may have a large number
of possible classes, the use of max-divergence (in particular, the
\emph{maximum} of the log-probability ratios) means that we always
look at the class with the \emph{most} disparity to determine how
discriminatory a classification is, rather than potentially amortizing
this disparity over a large number of classes. 

\end{document}